\newcolumntype{C}[1]{>{\centering\arraybackslash}b{\dimexpr #1\linewidth}}
\newcolumntype{D}[1]{>{\centering\arraybackslash}b{\dimexpr #1\linewidth+2\tabcolsep}}
\newcolumntype{E}[1]{>{\centering\arraybackslash}b{\dimexpr #1\linewidth+3\tabcolsep}}
\newcolumntype{L}[1]{>{\raggedright\arraybackslash}b{\dimexpr #1\linewidth}}
\newcolumntype{R}[1]{>{\raggedleft\arraybackslash}b{\dimexpr #1\linewidth}}
\newtheorem{theorem}{Theorem}
\newtheorem{lemma}{Lemma}
\newtheorem{definition}{Definition}
\title{Generalized Proximal Policy Optimization\\with Sample Reuse}
\author{%
   James Queeney \\
   Division of Systems Engineering \\
   Boston University \\
   \texttt{jqueeney@bu.edu} \\
   \And
   Ioannis Ch.~Paschalidis \\
   Department of Electrical and Computer Engineering \\
   Division of Systems Engineering \\
   Boston University \\
   \texttt{yannisp@bu.edu} \\
   \And
   Christos G.~Cassandras \\
   Department of Electrical and Computer Engineering \\
   Division of Systems Engineering \\
   Boston University \\
   \texttt{cgc@bu.edu} \\
}
\begin{document}

\maketitle

\begin{abstract}
In real-world decision making tasks, it is critical for data-driven reinforcement learning methods to be both stable and sample efficient. On-policy methods typically generate reliable policy improvement throughout training, while off-policy methods make more efficient use of data through sample reuse. In this work, we combine the theoretically supported stability benefits of on-policy algorithms with the sample efficiency of off-policy algorithms. We develop policy improvement guarantees that are suitable for the off-policy setting, and connect these bounds to the clipping mechanism used in Proximal Policy Optimization. This motivates an off-policy version of the popular algorithm that we call Generalized Proximal Policy Optimization with Sample Reuse. We demonstrate both theoretically and empirically that our algorithm delivers improved performance by effectively balancing the competing goals of stability and sample efficiency.
\end{abstract}


\section{Introduction}

In recent years, model-free deep reinforcement learning has been used to successfully solve complex simulated control tasks \citep{duan_2016}. Unfortunately, real-world adoption of these techniques remains limited. High-stakes real-world decision making settings demand methods that deliver stable, reliable performance throughout training. In addition, real-world data collection can be difficult and expensive, so learning must make efficient use of limited data. The combination of these requirements is not an easy task, as stability and sample efficiency often represent competing interests. Existing model-free deep reinforcement learning algorithms often focus on one of these goals, and as a result sacrifice performance with respect to the other.

On-policy reinforcement learning methods such as Proximal Policy Optimization (PPO) \citep{schulman_2017} deliver stable performance throughout training due to their connection to theoretical policy improvement guarantees. These methods are motivated by a lower bound on the expected performance loss at every update, which can be approximated using samples generated by the current policy. The theoretically supported stability of these methods is very attractive, but the need for on-policy data and the high-variance nature of reinforcement learning often requires significant data to be collected between every update, resulting in high sample complexity and slow learning.

Off-policy algorithms address the issue of high sample complexity by storing samples in a replay buffer, which allows data to be reused to calculate multiple policy updates. The ability to reuse samples improves learning speed, but also causes the distribution of data to shift away from the distribution generated by the current policy. This distribution shift invalidates the standard performance guarantees used in on-policy methods, and can lead to instability in the training process. Popular off-policy algorithms often require various implementation tricks and extensive hyperparameter tuning to control the instability caused by off-policy data.

By combining the attractive features of on-policy and off-policy methods in a principled way, we can balance the competing goals of stability and sample efficiency required in real-world decision making. We consider the popular on-policy algorithm PPO as our starting point due to its theoretically supported stable performance, and develop an off-policy variant with principled sample reuse that we call \emph{Generalized Proximal Policy Optimization with Sample Reuse (GePPO)}. Our algorithm is based on the following main contributions:
\begin{enumerate}
\item We extend existing policy improvement guarantees to the off-policy setting, resulting in a lower bound that can be approximated using data from all recent policies.
\item We develop connections between the clipping mechanism used in PPO and the penalty term in our policy improvement lower bound, which motivates a generalized clipping mechanism for off-policy data.
\item We propose an adaptive learning rate method based on the same penalty term that more closely connects theory and practice.
\end{enumerate}

We provide theoretical evidence that our algorithm effectively balances the goals of stability and sample efficiency, and we demonstrate the strong performance of our approach through experiments on high-dimensional continuous control tasks in OpenAI Gym's MuJoCo environments \citep{brockman_2016,todorov_2012}. 


\section{Related work}

\paragraph{On-policy policy improvement methods}

The goal of monotonic policy improvement was first introduced by \citet{kakade_2002} in Conservative Policy Iteration, which maximizes a lower bound on policy improvement that can be constructed using samples from the current policy. This theory of policy improvement has served as a fundamental building block in the design of on-policy deep reinforcement learning methods, including the popular algorithms Trust Region Policy Optimization (TRPO) \citep{schulman_2015} and Proximal Policy Optimization (PPO) \citep{schulman_2017}. TRPO achieves approximate policy improvement by enforcing a Kullback-Leibler (KL) divergence trust region, while PPO does so by clipping the probability ratio between current and future policies.

Due to the strong performance of TRPO and PPO, there has been substantial interest in better understanding these methods. \citet{engstrom_2020} and \citet{andrychowicz_2021} both performed extensive empirical analysis on the various implementation choices in these algorithms, while other research has focused on the clipping mechanism used in PPO. \citet{wang_2019} and \citet{wang_2020} both proposed modifications to the clipping mechanism based on a KL divergence trust region. To the best of our knowledge, we are the first to directly relate the clipping mechanism in PPO to the total variation distance between policies. \citet{wang_2020} also proposed a rollback operation to keep probability ratios close to the clipping range. We accomplish a similar goal by considering an adaptive learning rate.

\paragraph{Sample efficiency with off-policy data}

A common approach to improving the sample efficiency of model-free reinforcement learning is to reuse samples collected under prior policies. Popular off-policy policy gradient approaches such as Deep Deterministic Policy Gradient (DDPG) \citep{lillicrap_2016}, Twin Delayed DDPG (TD3) \citep{fujimoto_2018}, and Soft Actor-Critic (SAC) \citep{haarnoja_2018} accomplish this by storing data in a replay buffer and sampling from this buffer to calculate policy updates. Note that these methods are not motivated by policy improvement guarantees, and do not directly control the bias introduced by off-policy data.

Other approaches have combined on-policy and off-policy policy gradients, with the goal of balancing the variance of on-policy methods and the bias of off-policy methods \citep{gu_2017_qprop, gu_2017_ipg, odonoghue_2017, wang_2017, fakoor_2020}. \citet{gu_2017_ipg} demonstrated that the bias introduced by off-policy data is related to the KL divergence between the current policy and the behavior policy used to generate the data. \citet{fakoor_2020} considered a related KL divergence as a penalty term in their objective, while \citet{wang_2017} approximately controlled this KL divergence by applying a trust region around a target policy. These methods are related to the penalty term that appears in our generalized policy improvement lower bound, which can be bounded by a penalty that depends on KL divergence.

Finally, there have been heuristic attempts to incorporate off-policy data into PPO \citep{sovrano_2019,holubar_2020}. However, unlike our approach, these methods do not account for the distribution shift caused by off-policy data that invalidates the theoretical support for PPO.


\section{Preliminaries}

\paragraph{Reinforcement learning framework}

We consider an infinite-horizon, discounted Markov Decision Process (MDP) defined by the tuple $(\mathcal{S},\mathcal{A},p,r,\rho_0,\gamma)$, where $\mathcal{S}$ is the set of states, $\mathcal{A}$ is the set of actions, $p: \mathcal{S} \times \mathcal{A} \rightarrow \Delta_{\mathcal{S}}$ is the transition probability function, $r: \mathcal{S} \times \mathcal{A} \rightarrow \mathbb{R}$ is the reward function, $\rho_0$ is the initial state distribution, and $\gamma$ is the discount rate.

We model the agent's decisions as a stationary policy $\pi: \mathcal{S} \rightarrow \Delta_{\mathcal{A}}$. Our goal is to choose a policy that maximizes the expected total discounted rewards $J(\pi) = \mathop{\mathbb{E}}_{\tau \sim \pi} \left[ \sum_{t=0}^{\infty} \gamma^t r(s_t,a_t) \right]$, where $\tau \sim \pi$ represents a trajectory sampled according to $s_0 \sim \rho_0$, $a_t \sim \pi(\, \cdot \mid s_t)$, and $s_{t+1} \sim p(\, \cdot \mid s_t,a_t)$. A policy $\pi$ also induces a normalized discounted state visitation distribution $d^{\pi}$, where $d^{\pi}(s) = (1-\gamma) \sum_{t=0}^{\infty} \gamma^t \mathbb{P}(s_t = s \mid \rho_0, \pi, p)$. We write the corresponding normalized discounted state-action visitation distribution as $d^{\pi}(s,a) = d^{\pi}(s) \pi(a \mid s)$, where we make it clear from the context whether $d^{\pi}$ refers to a distribution over states or state-action pairs.

We denote the state value function of $\pi$ as $V^{\pi}(s) = \mathop{\mathbb{E}}_{\tau \sim \pi} \left[ \sum_{t=0}^{\infty} \gamma^t r(s_t,a_t) \mid s_0 = s \right]$, the state-action value function as $Q^{\pi}(s,a) = \mathop{\mathbb{E}}_{\tau \sim \pi} \left[ \sum_{t=0}^{\infty} \gamma^t r(s_t,a_t) \mid s_0 = s, a_0 = a \right]$, and the advantage function as $A^{\pi}(s,a) = Q^{\pi}(s,a) - V^{\pi}(s)$.

\paragraph{Policy improvement lower bound}

The starting point in the design of many popular on-policy algorithms is the following policy improvement lower bound, which was first developed by \citet{kakade_2002} and later refined by \citet{schulman_2015} and \citet{achiam_2017}:

\begin{lemma}[\citet{achiam_2017}]\label{lem:pd_lb}
Consider a current policy $\pi_k$. For any future policy $\pi$, we have
\begin{equation}
J(\pi) - J(\pi_k) \geq \frac{1}{1-\gamma} \mathop{\mathbb{E}}_{(s,a) \sim d^{\pi_k}} \left[ \frac{\pi(a \mid s)}{\pi_k(a \mid s)} A^{\pi_k}(s,a) \right] - \frac{2\gamma C^{\pi,\pi_k}}{(1-\gamma)^2} \mathop{\mathbb{E}}_{s \sim d^{\pi_k}} \left[ \operatorname{TV}(\pi,\pi_k)(s) \right],
\end{equation}
where $C^{\pi,\pi_k} = \max_{s \in \mathcal{S}} \left| \mathop{\mathbb{E}}_{a \sim \pi(\cdot \mid s)} \left[ A^{\pi_k}(s,a) \right] \right|$ and $\operatorname{TV}(\pi,\pi_k)(s)$ represents the total variation distance between the distributions $\pi(\, \cdot \mid s)$ and $\pi_k(\, \cdot \mid s)$. 
\end{lemma}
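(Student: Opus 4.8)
The plan is to start from the exact performance difference lemma of Kakade and Langford, which states $J(\pi) - J(\pi_k) = \frac{1}{1-\gamma}\mathop{\mathbb{E}}_{(s,a)\sim d^{\pi}}[A^{\pi_k}(s,a)]$, and then convert the expectation taken under the unknown visitation distribution $d^{\pi}$ into one taken under the available distribution $d^{\pi_k}$, controlling the resulting error with total variation distance. First I would apply importance sampling over actions to the target term: since $\mathop{\mathbb{E}}_{(s,a)\sim d^{\pi_k}}[\frac{\pi(a\mid s)}{\pi_k(a\mid s)}A^{\pi_k}(s,a)] = \mathop{\mathbb{E}}_{s\sim d^{\pi_k}}\mathop{\mathbb{E}}_{a\sim\pi}[A^{\pi_k}(s,a)]$, both the exact identity and the desired first term on the right-hand side reduce to an expectation of the same function $\bar{A}(s) := \mathop{\mathbb{E}}_{a\sim\pi}[A^{\pi_k}(s,a)]$, differing only in whether the state is drawn from $d^{\pi}$ or from $d^{\pi_k}$.

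Next I would isolate the gap $\frac{1}{1-\gamma}\left(\mathop{\mathbb{E}}_{s\sim d^{\pi}}[\bar{A}(s)] - \mathop{\mathbb{E}}_{s\sim d^{\pi_k}}[\bar{A}(s)]\right)$ and bound its magnitude by Hölder's inequality, yielding $\frac{1}{1-\gamma}\,\|\bar{A}\|_\infty\,\|d^{\pi}-d^{\pi_k}\|_1$. By the definition of the constant, $\|\bar{A}\|_\infty = C^{\pi,\pi_k}$, so the entire argument now hinges on controlling the $\ell_1$ distance between the two discounted state visitation distributions.

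The hard part will be bounding $\|d^{\pi}-d^{\pi_k}\|_1$ by the total variation distance between the policies. I would write each visitation distribution through its resolvent, $d^{\pi} = (1-\gamma)(I-\gamma P^{\pi})^{-1}\rho_0$, where $P^{\pi}(s'\mid s) = \sum_a \pi(a\mid s)\,p(s'\mid s,a)$ is the induced state-to-state transition operator, and apply the resolvent identity to obtain $d^{\pi}-d^{\pi_k} = \gamma\,(I-\gamma P^{\pi})^{-1}(P^{\pi}-P^{\pi_k})\,d^{\pi_k}$. Taking $\ell_1$ norms and using that $(I-\gamma P^{\pi})^{-1} = \sum_{t\geq 0}\gamma^t (P^{\pi})^t$ has induced $\ell_1$ norm at most $1/(1-\gamma)$ since each $P^{\pi}$ is stochastic, it remains to bound $\|(P^{\pi}-P^{\pi_k})\,d^{\pi_k}\|_1$. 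Expanding this directly, applying the triangle inequality, and using $\sum_{s'}p(s'\mid s,a)=1$ collapses it to $\sum_s d^{\pi_k}(s)\sum_a |\pi(a\mid s)-\pi_k(a\mid s)| = 2\mathop{\mathbb{E}}_{s\sim d^{\pi_k}}[\operatorname{TV}(\pi,\pi_k)(s)]$, giving $\|d^{\pi}-d^{\pi_k}\|_1 \leq \frac{2\gamma}{1-\gamma}\mathop{\mathbb{E}}_{s\sim d^{\pi_k}}[\operatorname{TV}(\pi,\pi_k)(s)]$.

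Finally I would substitute this visitation bound together with $\|\bar{A}\|_\infty = C^{\pi,\pi_k}$ back into the Hölder estimate and combine with the performance difference lemma. The exact identity supplies the leading advantage term, while the bounded gap, carrying the extra factor $\gamma/(1-\gamma)$ contributed by the resolvent, produces the penalty $\frac{2\gamma C^{\pi,\pi_k}}{(1-\gamma)^2}\mathop{\mathbb{E}}_{s\sim d^{\pi_k}}[\operatorname{TV}(\pi,\pi_k)(s)]$, which is exactly the claimed lower bound. I expect the resolvent manipulation and the reduction of the transition-operator discrepancy to policy total variation to be the only nonroutine steps; the remaining algebra is bookkeeping of the factors of $(1-\gamma)$.
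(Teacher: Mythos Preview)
Your proposal is correct and follows essentially the same route as the paper's argument (see the proof of Lemma~\ref{lem:pd_lb_piref} in the Appendix, which specializes to Lemma~\ref{lem:pd_lb} when $\pi_{\textnormal{ref}}=\pi_k$): start from the Kakade--Langford identity, add and subtract the surrogate term, bound the gap via H{\"o}lder with $\|\bar A\|_\infty = C^{\pi,\pi_k}$, and then control $\|d^\pi - d^{\pi_k}\|_1$ by the expected policy total variation. The only presentational difference is that the paper invokes the visitation bound $\operatorname{TV}(d^\pi,d^{\pi_k}) \leq \frac{\gamma}{1-\gamma}\mathbb{E}_{s\sim d^{\pi_k}}[\operatorname{TV}(\pi,\pi_k)(s)]$ as a black-box lemma from \citet{achiam_2017}, whereas you derive it explicitly via the resolvent identity $d^\pi - d^{\pi_k} = \gamma (I-\gamma P^\pi)^{-1}(P^\pi - P^{\pi_k}) d^{\pi_k}$; that derivation is itself the standard proof of the cited result, so the content is the same.
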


We refer to the first term of the lower bound in Lemma~\ref{lem:pd_lb} as the surrogate objective, and the second term as the penalty term. Note that we can guarantee policy improvement at every step of the learning process by choosing the next policy $\pi_{k+1}$ to maximize this lower bound. Because the expectations in Lemma~\ref{lem:pd_lb} depend on the current policy $\pi_k$, we can approximate this lower bound using samples generated by the current policy.

\paragraph{Proximal Policy Optimization}

PPO, which has become the default on-policy policy optimization algorithm due to its strong performance and simple implementation, is theoretically motivated by the policy improvement lower bound in Lemma~\ref{lem:pd_lb}. Rather than directly maximizing this lower bound, PPO considers the goal of maximizing the surrogate objective while constraining the next policy to be close to the current policy. In particular, PPO heuristically accomplishes this by considering the following objective at every policy update:
\begin{equation}\label{eq:ppo_loss}
L^{\textnormal{PPO}}_k \left( \pi \right) = \mathop{\mathbb{E}}_{(s,a) \sim d^{\pi_k}} \left[ \min \left(  \frac{\pi(a \mid s)}{\pi_k(a \mid s)} A^{\pi_k}(s,a),  \operatorname{clip}\left( \frac{\pi(a \mid s)}{\pi_k(a \mid s)},1-\epsilon,1+\epsilon \right) A^{\pi_k}(s,a) \right) \right],
\end{equation}
where $\operatorname{clip}(x,l,u) = \min (\max (x, l), u)$. As seen in the second term of this objective, PPO constrains the difference between consecutive policies by removing the incentive for the probability ratio $\nicefrac{\pi(a \mid s)}{\pi_k(a \mid s)}$ to leave the clipping range $[1-\epsilon,1+\epsilon]$. Finally, the outer minimization guarantees that \eqref{eq:ppo_loss} is a lower bound to the surrogate objective in Lemma~\ref{lem:pd_lb}. In practice, \eqref{eq:ppo_loss} is approximated using samples generated by the current policy $\pi_k$, and the resulting empirical objective is approximately optimized at every policy update using minibatch stochastic gradient ascent. 

For a sufficiently small learning rate and sufficiently large number of samples, PPO results in stable policy improvement throughout the learning process. However, it is well-known that high variance is a major issue in reinforcement learning, so often the number of samples must be large in order for the empirical objective to be an accurate estimator of the true objective \eqref{eq:ppo_loss}. Because these samples must be collected under the current policy between every policy update, PPO can be very sample intensive.


\section{Generalized policy improvement lower bound}

A logical approach to improve the sample efficiency of PPO is to reuse samples from prior policies, as done in off-policy algorithms. Unfortunately, the distribution shift between policies invalidates the policy improvement lower bound in Lemma~\ref{lem:pd_lb}, which provides the theoretical support for PPO's reliable performance. In order to retain the stability benefits of PPO while reusing samples from prior policies, we must incorporate these off-policy samples in a principled way. We accomplish this by developing a generalized policy improvement lower bound that can be approximated using samples from the last $M$ policies, rather than requiring samples be generated only from the current policy $\pi_k$:

\begin{theorem}[Generalized Policy Improvement Lower Bound]\label{thm:pd_lb_pimix}
Consider prior policies $\pi_{k-i}$, $i=0,\ldots,M-1$, where $\pi_k$ represents the current policy. For any choice of distribution $\nu = \left[ \nu_0 \cdots \nu_{M-1} \right]$ over the prior $M$ policies and any future policy $\pi$, we have
\begin{multline}
J(\pi) - J(\pi_k) \geq \frac{1}{1-\gamma} \mathop{\mathbb{E}}_{i \sim \nu} \left[ \mathop{\mathbb{E}}_{(s,a) \sim d^{\pi_{k-i}}} \left[ \frac{\pi(a \mid s)}{\pi_{k-i}(a \mid s)} A^{\pi_k}(s,a) \right] \right] \\ - \frac{2 \gamma C^{\pi,\pi_k}}{(1-\gamma)^2} \mathop{\mathbb{E}}_{i \sim \nu} \left[  \mathop{\mathbb{E}}_{s \sim d^{\pi_{k-i}}} \left[ \operatorname{TV}(\pi,\pi_{k-i})(s) \right] \right],
\end{multline}
where $C^{\pi,\pi_k}$ and $\operatorname{TV}(\pi,\pi_{k-i})(s)$ are defined as in Lemma~\ref{lem:pd_lb}. 
\end{theorem}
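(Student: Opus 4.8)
The plan is to work from the exact performance difference identity of \citet{kakade_2002}, which expresses the true performance gap in terms of the advantage $A^{\pi_k}$ integrated against the \emph{future} state visitation $d^{\pi}$, and then to trade this inaccessible distribution for each of the accessible distributions $d^{\pi_{k-i}}$. Concretely, I would begin from
\[
J(\pi) - J(\pi_k) = \frac{1}{1-\gamma} \mathop{\mathbb{E}}_{s \sim d^{\pi}} \mathop{\mathbb{E}}_{a \sim \pi}\left[ A^{\pi_k}(s,a) \right],
\]
and observe that the generalized bound differs from Lemma~\ref{lem:pd_lb} only in that the state distribution is shifted from $\pi_k$ to each prior policy $\pi_{k-i}$ while the advantage $A^{\pi_k}$ is held fixed. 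This rules out a direct reduction to Lemma~\ref{lem:pd_lb}, since substituting $\pi_{k-i}$ for $\pi_k$ there would alter both the advantage and the baseline $J(\pi_{k-i})$; the argument must instead be rerun with the reference policy for the state distribution decoupled from the reference policy for the advantage.

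For a fixed index $i$, write $g(s) = \mathop{\mathbb{E}}_{a \sim \pi}[A^{\pi_k}(s,a)]$, so that $\max_s |g(s)| = C^{\pi,\pi_k}$ by definition. Splitting $\mathop{\mathbb{E}}_{s \sim d^{\pi}}[g(s)] = \mathop{\mathbb{E}}_{s \sim d^{\pi_{k-i}}}[g(s)] + \sum_s (d^{\pi}(s) - d^{\pi_{k-i}}(s)) g(s)$ and applying H\"older's inequality to the second term yields the single-index lower bound
\[
J(\pi) - J(\pi_k) \geq \frac{1}{1-\gamma} \mathop{\mathbb{E}}_{s \sim d^{\pi_{k-i}}} \mathop{\mathbb{E}}_{a \sim \pi}\left[ A^{\pi_k}(s,a) \right] - \frac{C^{\pi,\pi_k}}{1-\gamma} \left\| d^{\pi} - d^{\pi_{k-i}} \right\|_1.
\]
The visitation identity $d^{\pi_{k-i}}(s,a) = d^{\pi_{k-i}}(s)\,\pi_{k-i}(a \mid s)$ then rewrites the surrogate term into the stated ratio form $\mathop{\mathbb{E}}_{(s,a) \sim d^{\pi_{k-i}}}[\frac{\pi(a \mid s)}{\pi_{k-i}(a \mid s)} A^{\pi_k}(s,a)]$, since the behavior density $\pi_{k-i}(a \mid s)$ cancels against the denominator of the importance ratio.

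The main obstacle, and the only nontrivial estimate, is controlling the visitation mismatch $\| d^{\pi} - d^{\pi_{k-i}} \|_1$ by the policy discrepancy. Here I would invoke the standard discounted-visitation perturbation bound
\[
\left\| d^{\pi} - d^{\pi_{k-i}} \right\|_1 \leq \frac{2\gamma}{1-\gamma} \mathop{\mathbb{E}}_{s \sim d^{\pi_{k-i}}}\left[ \operatorname{TV}(\pi,\pi_{k-i})(s) \right],
\]
which is exactly the inequality underlying the proof of Lemma~\ref{lem:pd_lb}, applied with $\pi_{k-i}$ in the role of the reference policy; it is obtained by unrolling $d^{\pi}$ and $d^{\pi_{k-i}}$ as geometric series in their transition operators and telescoping the per-step difference, each step of which is controlled by $\operatorname{TV}(\pi,\pi_{k-i})$. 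Substituting this bound produces the single-index inequality with penalty coefficient $\frac{2\gamma C^{\pi,\pi_k}}{(1-\gamma)^2}$ exactly as claimed.

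Finally, the full theorem follows by convexity. The single-index inequality holds for every $i \in \{0,\ldots,M-1\}$ with the \emph{same} left-hand side $J(\pi) - J(\pi_k)$, so multiplying by $\nu_i \geq 0$ and summing over any probability vector $\nu = [\nu_0 \cdots \nu_{M-1}]$ with $\sum_i \nu_i = 1$ leaves the left-hand side unchanged while averaging the two right-hand terms. Rewriting the resulting convex combinations with the $\mathop{\mathbb{E}}_{i \sim \nu}$ notation gives precisely the surrogate and penalty terms in the statement. No step other than the visitation perturbation bound constrains $\nu$ beyond being a valid probability distribution, which is why the result holds for arbitrary $\nu$.
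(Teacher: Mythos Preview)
Your proposal is correct and follows essentially the same route as the paper: start from the Kakade performance-difference identity, shift the state distribution from $d^{\pi}$ to $d^{\pi_{k-i}}$ at the cost of a $\|d^{\pi}-d^{\pi_{k-i}}\|_1$ term bounded via H\"older and the discounted-visitation perturbation inequality of \citet{achiam_2017}, rewrite the surrogate in ratio form, and then take the convex combination over $i\sim\nu$. The paper packages the single-index step as a separate lemma (a reference-policy generalization of Lemma~\ref{lem:pd_lb}) before averaging, but the argument is identical to yours.
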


\begin{proof}
We generalize Lemma~\ref{lem:pd_lb} to depend on expectations with respect to any reference policy, and we apply this result $M$ times where the reference policy is each of $\pi_{k-i}, i=0,\ldots,M-1$, respectively. Then, the convex combination determined by $\nu$ of the resulting $M$ policy improvement lower bounds is also a lower bound. See the Appendix for a full proof.
\end{proof}
Because the expectations in Theorem~\ref{thm:pd_lb_pimix} depend on distributions related to the last $M$ policies, this lower bound provides theoretical support for extending PPO to include off-policy samples. Note that Theorem~\ref{thm:pd_lb_pimix} is still a lower bound on the policy improvement between the current policy $\pi_k$ and a future policy $\pi$. This is true because the advantage function in the surrogate objective and the constant in the penalty term still depend on the current policy $\pi_k$. However, the visitation distribution, probability ratio in the surrogate objective, and total variation distance in the penalty term now depend on prior policies. Finally, the standard policy improvement lower bound in Lemma~\ref{lem:pd_lb} can be recovered from Theorem~\ref{thm:pd_lb_pimix} by setting $M=1$.

The penalty term in Theorem~\ref{thm:pd_lb_pimix} suggests that we should control the expected total variation distances between the future policy $\pi$ and the last $M$ policies. By applying the triangle inequality for total variation distance to each component of the penalty term, we see that 
\begin{multline}\label{eq:tvpen_triangle}
\mathop{\mathbb{E}}_{i \sim \nu} \left[ \mathop{\mathbb{E}}_{s \sim d^{\pi_{k-i}}} \left[ \operatorname{TV}(\pi,\pi_{k-i})(s) \right] \right] \leq \mathop{\mathbb{E}}_{i \sim \nu} \left[ \mathop{\mathbb{E}}_{s \sim d^{\pi_{k-i}}} \left[ \operatorname{TV}(\pi,\pi_{k})(s) \right] \right] \\ + \sum_{j=1}^{M-1} \sum_{i=j}^{M-1} \nu_i \mathop{\mathbb{E}}_{s \sim d^{\pi_{k-i}}} \left[ \operatorname{TV}(\pi_{k-j+1},\pi_{k-j})(s) \right].
\end{multline}

The first term on the right-hand side of \eqref{eq:tvpen_triangle} represents an expected total variation distance between the current policy $\pi_k$ and the future policy $\pi$, while each component of the second term represents an expected total variation distance between consecutive prior policies. This demonstrates that we can effectively control the expected performance loss at every policy update by controlling the expected total variation distance between consecutive policies. We see next that the clipping mechanism in PPO approximately accomplishes this task.


\section{Clipping mechanism}

\paragraph{Connection to penalty term}

As discussed previously, the clipping mechanism present in the PPO objective removes the incentive for the probability ratio $\nicefrac{\pi(a \mid s)}{\pi_k(a \mid s)}$ to leave the clipping range $[1-\epsilon,1+\epsilon]$. Written differently, the clipping mechanism removes the incentive for the magnitude of
\begin{equation}\label{eq:polrat_mag}
\left| \frac{\pi(a \mid s)}{\pi_k(a \mid s)} - 1  \right|
\end{equation}
to exceed $\epsilon$. We now see that \eqref{eq:polrat_mag} is closely related to the penalty term of the standard policy improvement lower bound in Lemma~\ref{lem:pd_lb} as follows:

\begin{lemma}\label{lem:tv_polrat_pik}
The expected total variation distance between the current policy $\pi_k$ and the future policy $\pi$ that appears in Lemma~\ref{lem:pd_lb} can be rewritten as
\begin{equation}
\mathop{\mathbb{E}}_{s \sim d^{\pi_k}} \left[ \operatorname{TV}(\pi,\pi_k)(s) \right] = \frac{1}{2} \mathop{\mathbb{E}}_{(s,a) \sim d^{\pi_k}} \left[ \, \left| \frac{\pi(a \mid s)}{\pi_k(a \mid s)} - 1  \right| \, \right].
\end{equation}
\end{lemma}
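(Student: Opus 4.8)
The plan is to start from the definition of total variation distance and rewrite everything in terms of an expectation over the state-action visitation distribution $d^{\pi_k}(s,a) = d^{\pi_k}(s)\pi_k(a \mid s)$. Recall that for two distributions over the action space, the total variation distance admits the standard $L^1$ representation
\[
\operatorname{TV}(\pi,\pi_k)(s) = \frac{1}{2} \sum_{a \in \mathcal{A}} \left| \pi(a \mid s) - \pi_k(a \mid s) \right|,
\]
(with the sum replaced by an integral in the continuous case). The key algebraic move is to factor $\pi_k(a \mid s)$ out of the absolute value inside this sum, writing $\left| \pi(a \mid s) - \pi_k(a \mid s) \right| = \pi_k(a \mid s) \left| \frac{\pi(a \mid s)}{\pi_k(a \mid s)} - 1 \right|$, which is valid wherever $\pi_k(a \mid s) > 0$.

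The main steps, in order, are as follows. First I would substitute the $L^1$ formula for $\operatorname{TV}(\pi,\pi_k)(s)$ into the left-hand side and write the outer expectation over $s \sim d^{\pi_k}$ explicitly as a sum (or integral) against $d^{\pi_k}(s)$. Second, I would apply the factoring identity above to turn each summand into $\frac{1}{2} d^{\pi_k}(s)\, \pi_k(a \mid s) \left| \frac{\pi(a \mid s)}{\pi_k(a \mid s)} - 1 \right|$. Third, I would recognize the product $d^{\pi_k}(s)\pi_k(a \mid s)$ as exactly the joint visitation measure $d^{\pi_k}(s,a)$, so that the double sum over $s$ and $a$ collapses into a single expectation $\mathop{\mathbb{E}}_{(s,a) \sim d^{\pi_k}}$, giving precisely the claimed right-hand side.

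The computation is essentially a one-line change of measure, so there is no serious analytic obstacle. The only point requiring care is the support condition: the factorization is only legitimate where $\pi_k(a \mid s) > 0$, and the probability ratio is otherwise undefined. I would handle this by noting that states-action pairs with $\pi_k(a \mid s) = 0$ carry zero weight under $d^{\pi_k}(s,a)$, so they contribute nothing to either side and can be excluded without affecting the identity (in the continuous case, the same reasoning holds almost everywhere with respect to the base measure). With that caveat dispatched, the two sides are term-by-term equal and the lemma follows.
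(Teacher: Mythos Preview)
Your approach is exactly the paper's: expand the $L^1$ formula for total variation, factor out $\pi_k(a\mid s)$, and reinterpret the resulting double integral as an expectation under $d^{\pi_k}(s,a)$. The only point to tighten is your handling of the support condition: pairs with $\pi_k(a\mid s)=0$ carry zero weight on the right-hand side, but they \emph{can} contribute to the left-hand side, since $\operatorname{TV}(\pi,\pi_k)(s)$ integrates $|\pi(a\mid s)-\pi_k(a\mid s)|$ over all actions regardless of $\pi_k$'s support. The paper closes this by assuming $\operatorname{supp}\pi(\cdot\mid s)\subseteq\operatorname{supp}\pi_k(\cdot\mid s)$ (true for the Gaussian policies used in practice), which makes those terms vanish on both sides; you should state the same assumption rather than claim the exclusion is automatic.
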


\begin{proof}
See the Appendix.
\end{proof}

Therefore, the clipping mechanism in PPO can be viewed as a heuristic that controls the magnitude of a sample-based approximation of the expectation on the right-hand side of Lemma~\ref{lem:tv_polrat_pik}. It accomplishes this by removing the incentive for \eqref{eq:polrat_mag} to exceed $\epsilon$ at all state-action pairs sampled from the state-action visitation distribution $d^{\pi_k}$. As a result, the clipping mechanism in PPO approximately bounds the expected total variation distance between the current policy $\pi_k$ and the future policy $\pi$ by $\nicefrac{\epsilon}{2}$.

\paragraph{Generalized clipping mechanism for off-policy data}

We can use this connection between the clipping mechanism and penalty term in PPO to derive a generalized clipping mechanism suitable for the off-policy setting. In particular, the decomposition of the off-policy penalty term from Theorem~\ref{thm:pd_lb_pimix} that appears in \eqref{eq:tvpen_triangle} suggests that we should control the expected total variation distance
\begin{equation}\label{eq:tvpen_cur}
\mathop{\mathbb{E}}_{i \sim \nu} \left[ \mathop{\mathbb{E}}_{s \sim d^{\pi_{k-i}}} \left[ \operatorname{TV}(\pi,\pi_{k})(s) \right] \right]
\end{equation}
at each policy update, which can be rewritten as follows:

\begin{lemma}\label{lem:tv_polrat_pimix}
The expected total variation distance between the current policy $\pi_k$ and the future policy $\pi$ in \eqref{eq:tvpen_cur} can be rewritten as
\begin{equation}
\mathop{\mathbb{E}}_{i \sim \nu} \left[ \mathop{\mathbb{E}}_{s \sim d^{\pi_{k-i}}} \left[ \operatorname{TV}(\pi,\pi_{k})(s) \right] \right] = \frac{1}{2} \mathop{\mathbb{E}}_{i \sim \nu} \left[ \mathop{\mathbb{E}}_{(s,a) \sim d^{\pi_{k-i}}} \left[ \, \left| \frac{\pi(a \mid s)}{\pi_{k-i}(a \mid s)} - \frac{\pi_k(a \mid s)}{\pi_{k-i}(a \mid s)}  \right| \, \right] \right].
\end{equation}
\end{lemma}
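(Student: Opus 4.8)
The plan is to reduce the claim to a pointwise identity at each state $s$ and then average over $s$ and $i$ using the state-action visitation factorization $d^{\pi_{k-i}}(s,a) = d^{\pi_{k-i}}(s) \pi_{k-i}(a \mid s)$. The structure mirrors the proof of Lemma~\ref{lem:tv_polrat_pik}, with the only change being that the behavior policy is now $\pi_{k-i}$ rather than $\pi_k$, so that the constant $1$ appearing in Lemma~\ref{lem:tv_polrat_pik} is replaced by the ratio $\pi_k(a\mid s)/\pi_{k-i}(a\mid s)$.

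First I would write out the total variation distance at a fixed state $s$ using its standard integral representation,
\[
\operatorname{TV}(\pi,\pi_k)(s) = \frac{1}{2} \int_{\mathcal{A}} \left| \pi(a \mid s) - \pi_k(a \mid s) \right| \, da,
\]
which has no dependence on $\pi_{k-i}$. The key step is a change of measure: I would factor $\pi_{k-i}(a \mid s)$ out of the absolute value, writing
\[
\left| \pi(a \mid s) - \pi_k(a \mid s) \right| = \pi_{k-i}(a \mid s) \left| \frac{\pi(a \mid s)}{\pi_{k-i}(a \mid s)} - \frac{\pi_k(a \mid s)}{\pi_{k-i}(a \mid s)} \right|,
\]
which converts the integral into an expectation over $a \sim \pi_{k-i}(\cdot \mid s)$ and yields the per-state identity
\[
\operatorname{TV}(\pi,\pi_k)(s) = \frac{1}{2} \mathop{\mathbb{E}}_{a \sim \pi_{k-i}(\cdot \mid s)} \left[ \left| \frac{\pi(a \mid s)}{\pi_{k-i}(a \mid s)} - \frac{\pi_k(a \mid s)}{\pi_{k-i}(a \mid s)} \right| \right].
\]

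To finish, I would take the expectation over $s \sim d^{\pi_{k-i}}$ of both sides, use the factorization $d^{\pi_{k-i}}(s,a) = d^{\pi_{k-i}}(s)\pi_{k-i}(a \mid s)$ to collapse the nested state and action expectations into a single expectation over $(s,a) \sim d^{\pi_{k-i}}$, and finally average over $i \sim \nu$. Since the per-state identity holds for every index $i$, the outer average over $\nu$ passes through by linearity of expectation, producing exactly the stated right-hand side.

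The only point requiring care is the change-of-measure step, which implicitly assumes $\pi_{k-i}(a \mid s) > 0$ wherever $\pi(a \mid s)$ or $\pi_k(a \mid s)$ is nonzero, i.e.\ that each behavior policy covers the support of $\pi$ and $\pi_k$. This is the standard absolute-continuity assumption underlying importance sampling, and under it the factorization is exact rather than an inequality. As a sanity check, the result collapses to Lemma~\ref{lem:tv_polrat_pik} upon setting $\pi_{k-i} = \pi_k$, since then $\pi_k(a \mid s)/\pi_{k-i}(a \mid s) = 1$ and $M=1$ forces $\nu$ to be a point mass at $i=0$.
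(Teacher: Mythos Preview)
Your proposal is correct and follows essentially the same approach as the paper: write the total variation distance as the $L^1$ integral $\frac{1}{2}\int_{\mathcal{A}}|\pi(a\mid s)-\pi_k(a\mid s)|\,da$, multiply and divide by $\pi_{k-i}(a\mid s)$ under the support assumption, and recognize the result as an expectation over $(s,a)\sim d^{\pi_{k-i}}$ before averaging over $i\sim\nu$. The paper states the support assumption in the same way you do, so there is nothing missing.
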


\begin{proof}
Apply the same techniques as in the proof of Lemma~\ref{lem:tv_polrat_pik}. See the Appendix for details.
\end{proof}

The right-hand side of Lemma~\ref{lem:tv_polrat_pimix} provides insight into the appropriate clipping mechanism to be applied in the off-policy setting in order to approximately control the penalty term in the generalized policy improvement lower bound:

\begin{definition}[Generalized Clipping Mechanism]\label{def:genclip}
Consider a current policy $\pi_k$ and clipping parameter $\epsilon$. For a state-action pair generated using a prior policy $\pi_{k-i}$, the \emph{generalized clipping mechanism} is defined as
\begin{equation}
\operatorname{clip}\left( \frac{\pi(a \mid s)}{\pi_{k-i}(a \mid s)},\frac{\pi_k(a \mid s)}{\pi_{k-i}(a \mid s)} - \epsilon,\frac{\pi_k(a \mid s)}{\pi_{k-i}(a \mid s)} + \epsilon \right).
\end{equation}
\end{definition}

Note that the probability ratio for each state-action pair in the off-policy setting begins in the center of the clipping range for every policy update just as in PPO, where now the center is given by $\nicefrac{\pi_k(a \mid s)}{\pi_{k-i}(a \mid s)}$. Also note that we recover the standard clipping mechanism used in PPO when samples are generated by the current policy $\pi_k$.

\paragraph{Impact of learning rate on clipping mechanism}

Due to the heuristic nature of the clipping mechanism, it can only approximately bound the penalty term in the corresponding policy improvement lower bound if the learning rate used for policy updates is sufficiently small. To see why this is true, note that the clipping mechanism has no impact at the beginning of each policy update since each probability ratio begins at the center of the clipping range \citep{engstrom_2020}. If the learning rate is too large, the initial gradient steps of the policy update can result in probability ratios that are far outside the clipping range. In addition, the sensitivity of the probability ratio to gradient updates can change as training progresses, which suggests that the learning rate may need to change over time in order for the clipping mechanism to approximately enforce a total variation distance trust region throughout the course of training.

In order to address these issues, we propose a simple adaptive learning rate method that is directly connected to our goal of controlling a total variation distance penalty term via the clipping mechanism. Using Lemma~\ref{lem:tv_polrat_pimix}, we can approximate the expected total variation distance of interest using a sample-based estimate. We reduce the learning rate if the estimated total variation distance exceeds our goal of $\nicefrac{\epsilon}{2}$, and we increase the learning rate if the estimated total variation distance is significantly lower than $\nicefrac{\epsilon}{2}$. This approach more closely connects the implementation of PPO to the policy improvement lower bound on which the algorithm is based. In addition, the adaptive learning rate prevents large policy updates that can lead to instability, while also increasing the speed of learning when policy updates are too small. We formally describe this method in Algorithm \ref{alg:geppo}.


\section{Algorithm}

The surrogate objective from the generalized policy improvement lower bound in Theorem~\ref{thm:pd_lb_pimix}, coupled with the generalized clipping mechanism in Definition~\ref{def:genclip} that controls the penalty term in Theorem~\ref{thm:pd_lb_pimix}, motivate the following generalized PPO objective that directly considers the use of off-policy data:
\begin{multline}\label{eq:geppo_loss}
L^{\textnormal{GePPO}}_k \left( \pi \right) = \mathop{\mathbb{E}}_{i \sim \nu} \Bigg[ \mathop{\mathbb{E}}_{(s,a) \sim d^{\pi_{k-i}}} \left[ \min \left(  \frac{\pi(a \mid s)}{\pi_{k-i}(a \mid s)} A^{\pi_k}(s,a), \right. \right. \\ \left. \left.  \operatorname{clip}\left( \frac{\pi(a \mid s)}{\pi_{k-i}(a \mid s)},\frac{\pi_k(a \mid s)}{\pi_{k-i}(a \mid s)}-\epsilon,\frac{\pi_k(a \mid s)}{\pi_{k-i}(a \mid s)}+\epsilon \right) A^{\pi_k}(s,a) \right) \right] \Bigg].
\end{multline}

Our algorithm, which we call Generalized Proximal Policy Optimization with Sample Reuse (GePPO), approximates this objective using samples collected from each of the last $M$ policies and approximately optimizes it using minibatch stochastic gradient ascent. In addition, we update the learning rate at every iteration using the adaptive method described in the previous section. GePPO, which we detail in Algorithm \ref{alg:geppo}, represents a principled approach to improving the sample efficiency of PPO while retaining its approximate policy improvement guarantees.

\begin{algorithm}[t]
\KwIn{initial policy $\pi_0$; number of prior policies $M$; policy weights $\nu$; clipping parameter $\epsilon$; batch size $n$; initial learning rate $\eta$; adaptive factor $\alpha \geq 0$; minimum threshold factor $0 \leq \beta \leq 1$.}
\BlankLine
\For{$k=0,1,2,\ldots$}{
	\BlankLine
	Collect $n$ samples with $\pi_k$.	
	\BlankLine
	\underline{Update policy}:
	\BlankLine
	Use $n$ samples from each of $\pi_{k-i}$, $i=0,\ldots,M-1$, to approximate $L^{\textnormal{GePPO}}_k \left( \pi \right)$. 
	\BlankLine	
	Approximately maximize the empirical objective using minibatch stochastic gradient ascent.
	\BlankLine
	\underline{Update learning rate}:
	\BlankLine
	Calculate sample-based estimate $\widehat{\operatorname{TV}}$ of expected total variation distance in Lemma~\ref{lem:tv_polrat_pimix}.
	\BlankLine
	\lIf{$\widehat{\operatorname{TV}} > \nicefrac{\epsilon}{2}$}{
		$\eta = \eta \cdot \nicefrac{1}{(1+\alpha)}$
	}
	\BlankLine	
	\DontPrintSemicolon
	\lElseIf{$\widehat{\operatorname{TV}} < \nicefrac{\beta \epsilon}{2}$}{
		$\eta = \eta \cdot (1+\alpha)$.	
	}
	\BlankLine	
}

\caption{Generalized Proximal Policy Optimization with Sample Reuse (GePPO)}\label{alg:geppo}
\end{algorithm}


\section{Sample efficiency analysis}\label{sec:analysis}

\paragraph{Generalized clipping parameter}

In order to compare GePPO to PPO, we first must determine the appropriate choice of clipping parameter $\epsilon^{\textnormal{GePPO}}$ that results in the same worst-case expected performance loss at every update:

\begin{lemma}\label{lem:geppo_param}
Consider PPO with clipping parameter $\epsilon^{\textnormal{PPO}}$ and GePPO with clipping parameter $\epsilon^{\textnormal{GePPO}}$. If 
\begin{equation}
\epsilon^{\textnormal{GePPO}} = \frac{\epsilon^{\textnormal{PPO}}}{ \mathop{\mathbb{E}}_{i \sim \nu} \left[ \, i + 1 \, \right]},
\end{equation}
then the worst-case expected performance loss at every update is the same under both algorithms.
\end{lemma}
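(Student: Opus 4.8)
The plan is to identify the worst-case expected performance loss at an update with the penalty term of the relevant policy improvement lower bound, since this is precisely the quantity the clipping mechanism is designed to bound, and then to compare its clipped value across the two algorithms. For PPO (the case $M=1$), the penalty term of Lemma~\ref{lem:pd_lb} is $\frac{2\gamma C^{\pi,\pi_k}}{(1-\gamma)^2}\,\mathbb{E}_{s\sim d^{\pi_k}}[\operatorname{TV}(\pi,\pi_k)(s)]$, and by Lemma~\ref{lem:tv_polrat_pik} together with the clipping argument the parameter $\epsilon^{\textnormal{PPO}}$ approximately bounds the expected total variation factor by $\nicefrac{\epsilon^{\textnormal{PPO}}}{2}$. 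Thus the worst-case PPO penalty is approximately $\frac{\gamma C^{\pi,\pi_k}}{(1-\gamma)^2}\,\epsilon^{\textnormal{PPO}}$. Since the constant $\frac{\gamma C^{\pi,\pi_k}}{(1-\gamma)^2}$ is common to both bounds, the comparison reduces to matching the expected total variation factors.

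For GePPO I would start from the penalty term of Theorem~\ref{thm:pd_lb_pimix} and apply the decomposition \eqref{eq:tvpen_triangle}, which splits it into a \emph{current} piece $\mathbb{E}_{i\sim\nu}[\mathbb{E}_{s\sim d^{\pi_{k-i}}}[\operatorname{TV}(\pi,\pi_k)(s)]]$ and a \emph{consecutive} piece $\sum_{j=1}^{M-1}\sum_{i=j}^{M-1}\nu_i\,\mathbb{E}_{s\sim d^{\pi_{k-i}}}[\operatorname{TV}(\pi_{k-j+1},\pi_{k-j})(s)]$. The generalized clipping mechanism of Definition~\ref{def:genclip}, via Lemma~\ref{lem:tv_polrat_pimix}, approximately bounds the current piece by $\nicefrac{\epsilon^{\textnormal{GePPO}}}{2}$. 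The key observation for the consecutive piece is that each distance $\operatorname{TV}(\pi_{k-j+1},\pi_{k-j})$ was itself produced by a clipped update, so in the worst case each such expected distance is likewise approximately $\nicefrac{\epsilon^{\textnormal{GePPO}}}{2}$.

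Granting this, the remaining step is purely combinatorial: exchange the order of summation to collapse the double sum, $\sum_{j=1}^{M-1}\sum_{i=j}^{M-1}\nu_i=\sum_{i=1}^{M-1}i\,\nu_i$, and then use $\sum_{i=0}^{M-1}\nu_i=1$ to write the total GePPO factor as $\frac{\epsilon^{\textnormal{GePPO}}}{2}\bigl(1+\sum_{i=1}^{M-1}i\,\nu_i\bigr)=\frac{\epsilon^{\textnormal{GePPO}}}{2}\,\mathbb{E}_{i\sim\nu}[i+1]$. Equating this with the PPO factor $\nicefrac{\epsilon^{\textnormal{PPO}}}{2}$ and solving gives $\epsilon^{\textnormal{GePPO}}=\epsilon^{\textnormal{PPO}}/\mathbb{E}_{i\sim\nu}[i+1]$, as claimed; taking $M=1$ recovers $\epsilon^{\textnormal{GePPO}}=\epsilon^{\textnormal{PPO}}$ as a sanity check.

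The main obstacle is justifying that each consecutive term $\mathbb{E}_{s\sim d^{\pi_{k-i}}}[\operatorname{TV}(\pi_{k-j+1},\pi_{k-j})(s)]$ is approximately $\nicefrac{\epsilon^{\textnormal{GePPO}}}{2}$. The clipping mechanism at update $k-j$ only controls a $\nu$-weighted average of such distances over the visitation distributions $d^{\pi_{(k-j)-i'}}$, $i'=0,\ldots,M-1$, rather than the individual distance under the particular distribution $d^{\pi_{k-i}}$ appearing here (note that $i-j\in\{0,\ldots,M-2\}$, so $d^{\pi_{k-i}}$ is indeed one of the distributions mixed over at that earlier update). Passing from the controlled weighted average to a per-term bound requires treating the per-update expected total variation distance between consecutive policies as uniformly equal to its target $\nicefrac{\epsilon^{\textnormal{GePPO}}}{2}$ across recent visitation distributions. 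This is consistent with the heuristic, worst-case spirit of the clipping analysis and is the step that renders the statement an approximate rather than exact equality.
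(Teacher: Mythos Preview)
Your argument is correct and essentially identical to the paper's: both apply the triangle inequality to $\operatorname{TV}(\pi,\pi_{k-i})$, bound each of the resulting $i+1$ consecutive-update terms by $\nicefrac{\epsilon^{\textnormal{GePPO}}}{2}$, and take the $\nu$-expectation to obtain the factor $\mathbb{E}_{i\sim\nu}[i+1]$. The only cosmetic difference is that the paper writes the bound directly as $\mathbb{E}_{i\sim\nu}\bigl[\sum_{j=0}^{i}\mathbb{E}_{s\sim d^{\pi_{k-i}}}[\operatorname{TV}(\pi_{k-j+1},\pi_{k-j})(s)]\bigr]$ and counts $i+1$ terms, whereas you route through the reorganized form \eqref{eq:tvpen_triangle} and then undo the reorganization combinatorially; the paper likewise asserts the per-term $\nicefrac{\epsilon^{\textnormal{GePPO}}}{2}$ bound as an approximation without the finer caveat you raise.
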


\begin{proof}
See the Appendix.
\end{proof}

Lemma~\ref{lem:geppo_param} shows that we must make smaller policy updates compared to PPO in terms of total variation distance, which is a result of utilizing samples from prior policies. However, these additional samples stabilize policy updates by increasing the batch size used to approximate the true objective, which allows us to make policy updates more frequently. Ultimately, this trade-off results in faster and more stable learning, as we detail next.

\paragraph{Balancing stability and sample efficiency}

For concreteness, in this section we restrict our attention to uniform policy weights over the last $M$ policies, i.e., $\nu_i = \nicefrac{1}{M}$ for $i=0,\ldots,M-1$. We provide additional details in the Appendix on how these policy weights can be optimized to further improve upon the results shown for the uniform case.

Assume we require $N = Bn$ samples for the empirical objective to be a sufficiently accurate estimate of the true objective, where $n$ is the smallest possible batch size we can collect and $B$ is some positive integer. In this setting, PPO makes one policy update per $N$ samples collected, while GePPO leverages data from prior policies to make $B$ updates per $N$ samples collected as long as $M \geq B$. First, we see that GePPO can increase the change in total variation distance of the policy throughout training compared to PPO without sacrificing stability in terms of sample size:

\begin{theorem}\label{thm:geppo_tv}
Set $M = B$ and consider uniform policy weights. Then, GePPO increases the change in total variation distance of the policy throughout training by a factor of $\nicefrac{2B}{(B+1)}$ compared to PPO, while using the same number of samples for each policy update as PPO.
\end{theorem}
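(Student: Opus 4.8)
The plan is to compare PPO and GePPO on a common footing — a fixed number $N = Bn$ of newly collected samples — and to track two quantities: how far the policy moves in total variation distance per update, and how many updates each algorithm performs over those $N$ samples. First I would quantify the per-update movement. The discussion following Lemma~\ref{lem:tv_polrat_pik} establishes that PPO's clipping mechanism approximately bounds the expected total variation distance between the current policy $\pi_k$ and the next policy by $\nicefrac{\epsilon^{\textnormal{PPO}}}{2}$, so each PPO update advances the policy by $\nicefrac{\epsilon^{\textnormal{PPO}}}{2}$ in this metric. The analogous statement for GePPO follows from Lemma~\ref{lem:tv_polrat_pimix}, whose left-hand side is exactly the expected total variation distance between $\pi_k$ and the future policy $\pi$ that the generalized clipping mechanism controls; hence each GePPO update advances the policy by approximately $\nicefrac{\epsilon^{\textnormal{GePPO}}}{2}$.

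Next I would substitute the generalized clipping parameter. With uniform weights $\nu_i = \nicefrac{1}{M}$ and $M = B$, a direct computation gives
\begin{equation*}
\mathop{\mathbb{E}}_{i \sim \nu}[\, i + 1 \,] = \frac{1}{B} \sum_{i=0}^{B-1} (i+1) = \frac{B+1}{2},
\end{equation*}
so Lemma~\ref{lem:geppo_param} yields $\epsilon^{\textnormal{GePPO}} = \nicefrac{2\epsilon^{\textnormal{PPO}}}{(B+1)}$, and each GePPO update advances the policy by approximately $\nicefrac{\epsilon^{\textnormal{PPO}}}{(B+1)}$. I would then count updates: over $N = Bn$ freshly collected samples, PPO collects the full batch at once and performs a single update, whereas GePPO collects $n$ samples per iteration and so performs $B$ updates, each one drawing on the $M = B$ batches of size $n$ from the last $M$ policies for a total of $Mn = Bn = N$ samples per update, identical to PPO's per-update sample size. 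This establishes the ``same number of samples'' half of the claim.

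Finally I would aggregate. Over the $N$ collected samples, the cumulative total variation distance traversed is approximately $\nicefrac{\epsilon^{\textnormal{PPO}}}{2}$ for PPO (one update) and $B \cdot \nicefrac{\epsilon^{\textnormal{PPO}}}{(B+1)} = \nicefrac{B\,\epsilon^{\textnormal{PPO}}}{(B+1)}$ for GePPO ($B$ updates), so the ratio is $\nicefrac{2B}{(B+1)}$, as claimed. The main obstacle is not the arithmetic but pinning down the right notion of ``change in total variation distance throughout training'' and the basis of comparison: the factor $\nicefrac{2B}{(B+1)}$ holds only when both algorithms are measured over the same number of collected samples, and when each update is credited with the $\nicefrac{\epsilon}{2}$ movement that the clipping mechanism enforces only approximately. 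I would therefore be careful to frame the comparison at the level of the total-variation trust region each algorithm targets, rather than the exact realized movement, so that the heuristic nature of clipping does not undermine the bound.
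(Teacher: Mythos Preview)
Your proposal is correct and follows essentially the same argument as the paper: compute $\epsilon^{\textnormal{GePPO}}$ from Lemma~\ref{lem:geppo_param} with uniform weights and $M=B$ to get $\epsilon^{\textnormal{GePPO}} = \nicefrac{2\epsilon^{\textnormal{PPO}}}{(B+1)}$, observe that GePPO makes $B$ updates per $N=Bn$ collected samples while PPO makes one, and multiply to obtain the $\nicefrac{2B}{(B+1)}$ factor. Your explicit justification that each GePPO update uses $Mn = Bn = N$ samples is a detail the paper leaves implicit but is exactly the right observation for the sample-size claim.
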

\begin{proof}
See the Appendix.
\end{proof}

Alternatively, we see that GePPO can increase the sample size used to approximate each policy update compared to PPO, while maintaining the same change in total variation distance throughout training:

\begin{theorem}\label{thm:geppo_ess}
Set $M = 2B-1$ and consider uniform policy weights. Then, GePPO increases the sample size used for each policy update by a factor of $\nicefrac{(2B-1)}{B}$ compared to PPO, while maintaining the same change in total variation distance of the policy throughout training as PPO.
\end{theorem}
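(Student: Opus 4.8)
The plan is to reduce the claim to two essentially independent computations, both resting on Lemma~\ref{lem:geppo_param}: one fixing the generalized clipping parameter under uniform weights and verifying that the cumulative change in total variation distance matches PPO, and one counting the number of samples consumed by each update. First I would specialize the generalized clipping parameter. For $\nu_i = \nicefrac{1}{M}$ we have $\mathop{\mathbb{E}}_{i \sim \nu}[i+1] = \frac{1}{M}\sum_{i=0}^{M-1}(i+1) = \frac{M+1}{2}$, so Lemma~\ref{lem:geppo_param} gives $\epsilon^{\textnormal{GePPO}} = \frac{2\,\epsilon^{\textnormal{PPO}}}{M+1}$. Substituting $M = 2B-1$ yields $\epsilon^{\textnormal{GePPO}} = \frac{\epsilon^{\textnormal{PPO}}}{B}$. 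I would also record that $M = 2B-1 \geq B$ for every integer $B \geq 1$, so the assumption $M \geq B$ holds and GePPO can indeed make $B$ updates per $N = Bn$ samples collected.

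Next I would compute the cumulative change in total variation distance over a shared sample budget. By Lemma~\ref{lem:tv_polrat_pimix} together with the interpretation of the clipping mechanism, each GePPO update approximately bounds the expected total variation distance between consecutive policies by $\nicefrac{\epsilon^{\textnormal{GePPO}}}{2}$, just as each PPO update bounds it by $\nicefrac{\epsilon^{\textnormal{PPO}}}{2}$. Over a common budget of $T$ collected samples, PPO performs $\nicefrac{T}{N} = \nicefrac{T}{(Bn)}$ updates, for a total movement of $\frac{T}{Bn}\cdot\frac{\epsilon^{\textnormal{PPO}}}{2}$, while GePPO performs $\nicefrac{T}{n}$ updates, for a total movement of $\frac{T}{n}\cdot\frac{\epsilon^{\textnormal{GePPO}}}{2}$. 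Plugging in $\epsilon^{\textnormal{GePPO}} = \nicefrac{\epsilon^{\textnormal{PPO}}}{B}$ makes these two quantities identical, which establishes that the change in total variation distance throughout training is the same under both algorithms. Read in reverse, demanding equal cumulative movement forces $\epsilon^{\textnormal{GePPO}} = \nicefrac{\epsilon^{\textnormal{PPO}}}{B}$, and this is exactly the value the choice $M = 2B-1$ produces—so the same-movement condition and the setting $M=2B-1$ are two views of one equation.

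Finally, the per-update sample size follows from a direct count. With uniform weights the empirical objective is a plain average over the $n$ samples drawn from each of the last $M$ policies, so each sample carries equal weight and the sample size per GePPO update is $Mn = (2B-1)n$, against $N = Bn$ for PPO. Their ratio is $\nicefrac{(2B-1)}{B}$, as claimed. The only delicate step is the accounting in the middle paragraph: pinning down that ``change in total variation distance throughout training'' should be measured as cumulative per-update movement over a fixed sample budget, and that each update contributes $\nicefrac{\epsilon}{2}$ of movement through the clipping mechanism. Once that interpretation is fixed, the remainder of the argument is the elementary arithmetic above.
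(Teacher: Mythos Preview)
Your proposal is correct and follows essentially the same approach as the paper: compute $\epsilon^{\textnormal{GePPO}} = \epsilon^{\textnormal{PPO}}/B$ from Lemma~\ref{lem:geppo_param} with uniform weights and $M=2B-1$, verify that $B$ GePPO updates per $N$ samples yield the same cumulative total variation movement as one PPO update, and count the per-update samples as $(2B-1)n$ versus $Bn$. The only differences are cosmetic (you phrase the TV accounting over a general budget $T$ and explicitly check $M\geq B$), and the order of the two computations is swapped.
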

\begin{proof}
See the Appendix.
\end{proof}

By combining the results in Theorem~\ref{thm:geppo_tv} and Theorem~\ref{thm:geppo_ess}, we see that GePPO with uniform policy weights improves the trade-off between stability and sample efficiency in PPO for any choice of $B \leq M \leq 2B-1$. Also note that the benefit of GePPO increases with $B$, where a larger value of $B$ indicates a more complex problem that requires additional samples to estimate the true objective at every policy update. This is precisely the scenario where the trade-off between stability and sample efficiency becomes critical.


\section{Experiments}\label{sec:experiments}

\begin{table}
  \caption{Performance comparison across MuJoCo tasks.}
  \label{tab:exp}
  \centering  
  \begin{tabular}{l R{0.075} R{0.075} R{0.05} R{0.075} R{0.075} R{0.05} R{0.075} R{0.075} }    
    \toprule
     & \multicolumn{3}{E{0.2}}{Average Performance Over 1M Steps} & \multicolumn{3}{E{0.2}}{Final Performance} & \multicolumn{2}{D{0.15}}{Steps (M) to Final PPO Performance} \\     
    \cmidrule(lr){2-4} \cmidrule(lr){5-7} \cmidrule(lr){8-9}
    Environment & \multicolumn{1}{C{0.075}}{PPO} & \multicolumn{1}{C{0.075}}{GePPO} & \multicolumn{1}{C{0.05}}{\%$^*$} & \multicolumn{1}{C{0.075}}{PPO} & \multicolumn{1}{C{0.075}}{GePPO} & \multicolumn{1}{C{0.05}}{\%$^*$} & \multicolumn{1}{C{0.075}}{PPO} & \multicolumn{1}{C{0.075}}{GePPO} \\
    \midrule
    Swimmer-v3     & $98$ \hspace{1em}     & $161$ \hspace{1em}    & 65\% & $136$ \hspace{1em}    & $195$ \hspace{1em}    & 44\% & $1.00$ \hspace{1em} & $0.23$ \hspace{1em} \\
    Hopper-v3      & $2{,}362$ \hspace{1em} & $2{,}544$ \hspace{1em} & 8\%  & $3{,}126$ \hspace{1em} & $3{,}450$ \hspace{1em} & 10\% & $1.00$ \hspace{1em} & $0.41$ \hspace{1em} \\
    HalfCheetah-v3 & $1{,}764$ \hspace{1em} & $2{,}439$ \hspace{1em} & 38\% & $3{,}223$ \hspace{1em} & $3{,}903$ \hspace{1em} & 21\% & $1.00$ \hspace{1em} & $0.54$ \hspace{1em} \\
    Walker2d-v3    & $1{,}817$ \hspace{1em} & $2{,}199$ \hspace{1em} & 21\% & $3{,}041$ \hspace{1em} & $3{,}502$ \hspace{1em} & 15\% & $1.00$ \hspace{1em} & $0.63$ \hspace{1em} \\
    Ant-v3         & $545$ \hspace{1em}     & $762$ \hspace{1em}     & 40\% & $1{,}227$ \hspace{1em} & $1{,}576$ \hspace{1em} & 28\% & $1.00$ \hspace{1em} & $0.79$ \hspace{1em} \\
    Humanoid-v3    & $584$ \hspace{1em}    & $665$ \hspace{1em}    & 14\% & $972$ \hspace{1em}    & $1{,}345$ \hspace{1em} & 38\% & $1.00$ \hspace{1em} & $0.85$ \hspace{1em} \\            
	\addlinespace
    \cmidrule(lr){1-3}
    \multicolumn{9}{l}{\footnotesize{$^*$ Represents percent improvement of GePPO compared to PPO.}} \\
    \bottomrule
  \end{tabular}
\end{table}

\begin{figure}
\centering
\includegraphics[width=1.00\linewidth]{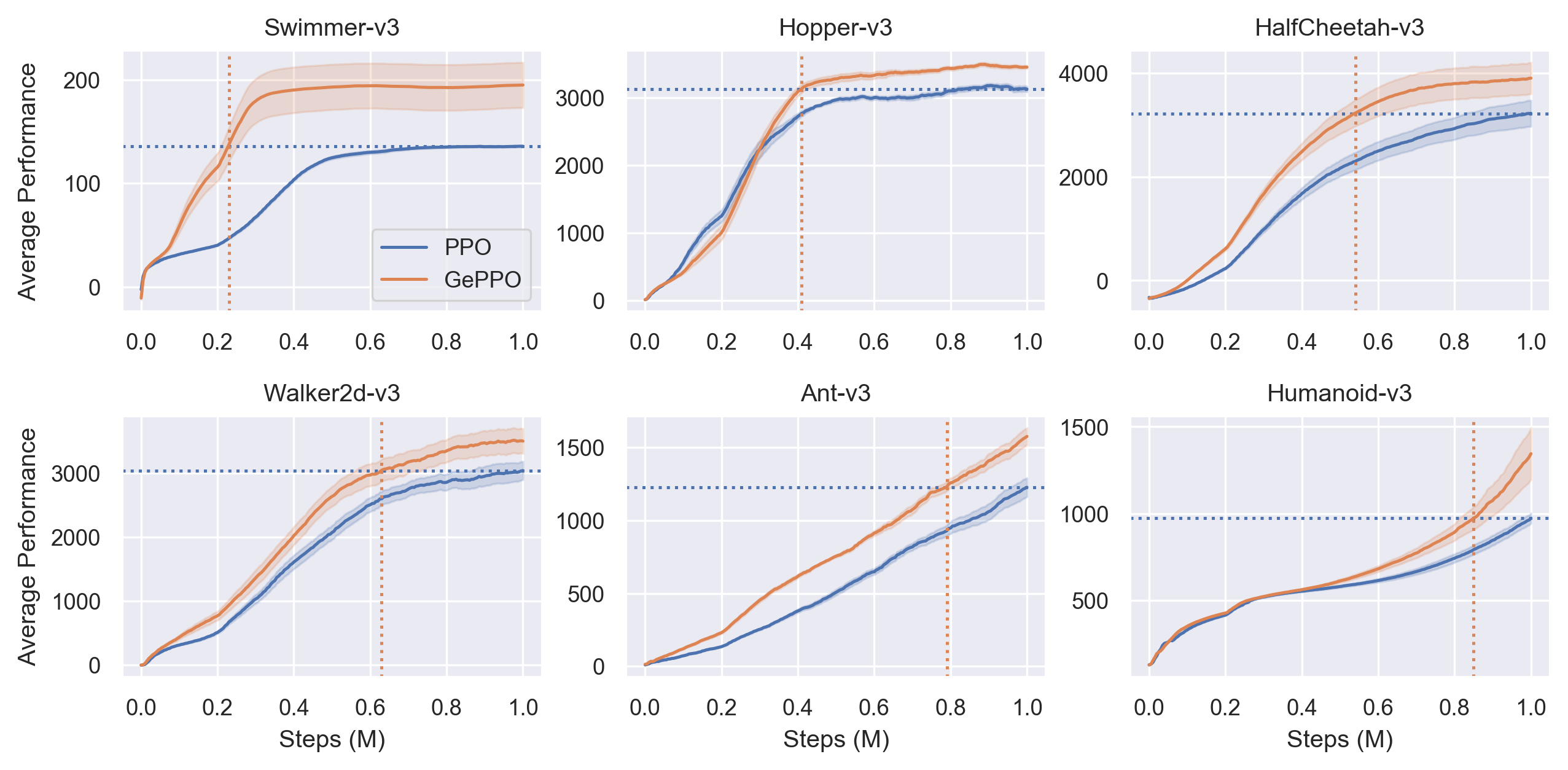}
\caption{Performance throughout training across MuJoCo tasks. Shading denotes half of one standard error. Horizontal dotted lines represent the final performance of PPO, and vertical dotted lines represent the time at which GePPO achieves the same performance.}\label{fig:Jplot}
\end{figure}

In addition to the theoretical support for our algorithm in the previous section, we aim to investigate the stability and sample efficiency of GePPO experimentally through simulations on several MuJoCo environments \citep{todorov_2012} in OpenAI Gym \citep{brockman_2016}. In particular, we consider six continuous control locomotion tasks which vary in dimensionality: Swimmer-v3, Hopper-v3, HalfCheetah-v3, Walker2d-v3, Ant-v3, and Humanoid-v3. We compare the performance of our algorithm to PPO, which is the default on-policy policy optimization algorithm. We do not consider a comparison with popular off-policy algorithms since they lack approximate policy improvement guarantees, and as a result the risk associated with each policy update is not comparable.

We consider the default implementation choices used by \citet{henderson_2018} for PPO. In particular, we represent the policy $\pi$ as a multivariate Gaussian distribution, where the mean action for a given state is parameterized by a neural network with two hidden layers of 64 units each and tanh activations. The state-independent standard deviation is parameterized separately. The default value for the clipping parameter is $\epsilon^{\textnormal{PPO}}=0.2$, and the default batch size is $N=2{,}048$. Sample trajectories for the tasks we consider can contain up to one thousand steps, so we represent the default batch size as $n=1{,}024$ and $B=2$ using the notation from the previous section. 

For GePPO, we select $M$ and the corresponding policy weights $\nu$ to maximize the effective batch size used for policy updates while maintaining the same change in total variation distance throughout training as PPO. The clipping parameter $\epsilon^{\textnormal{GePPO}}$ is chosen according to Lemma~\ref{lem:geppo_param}, which in our experiments results in $\epsilon^{\textnormal{GePPO}}=0.1$. We estimate $A^{\pi_k}(s,a)$ with an off-policy variant of Generalized Advantage Estimation \citep{schulman_2016} that uses the V-trace value function estimator \citep{espeholt_2018}. We run each experiment for a total of one million steps over five random seeds. See the Appendix for additional implementation details, including the values of all hyperparameters.\footnote{Code available at \url{https://github.com/jqueeney/geppo}.} 

As shown in Table~\ref{tab:exp} and Figure~\ref{fig:Jplot}, GePPO results in fast and reliable learning. We see that GePPO leads to improved final performance across all environments. In addition to final performance, we assess the sample efficiency of our algorithm by considering the average performance over the course of training as well as the number of samples required for GePPO to match the final performance of PPO. Despite the fact that the default implementation of PPO achieves stable learning with a small batch size ($B=2$), our results still demonstrate the sample efficiency benefits of GePPO. Compared to PPO, GePPO improves the average performance over training by between 8\% and 65\%. Moreover, GePPO requires between 15\% and 77\% fewer samples to reach the final performance level of PPO.

In addition to improving sample efficiency, GePPO also ensures that the total variation distance between consecutive policies remains close to the target determined by the clipping parameter through the use of an adaptive learning rate. This is not the case in PPO, where we observe that the change in total variation distance per policy update increases throughout training. As shown on the left-hand side of Figure~\ref{fig:wide}, the change in total variation distance for PPO under default settings is almost 40\% higher than desired after one million steps on Walker2d-v3. We demonstrate that this can lead to instability in the training process by considering a standard wide policy network with two hidden layers of 400 and 300 units, respectively. We see on the right-hand side of Figure~\ref{fig:wide} that this minor implementation change exacerbates the trend observed in PPO under the default settings, resulting in unstable learning where performance declines over time due to excessively large policy updates. GePPO, on the other hand, successfully controls this source of instability through its adaptive learning rate, resulting in stable policy improvement that is robust to implementation choices. 

\begin{figure}
\centering
\includegraphics[width=1.0\linewidth]{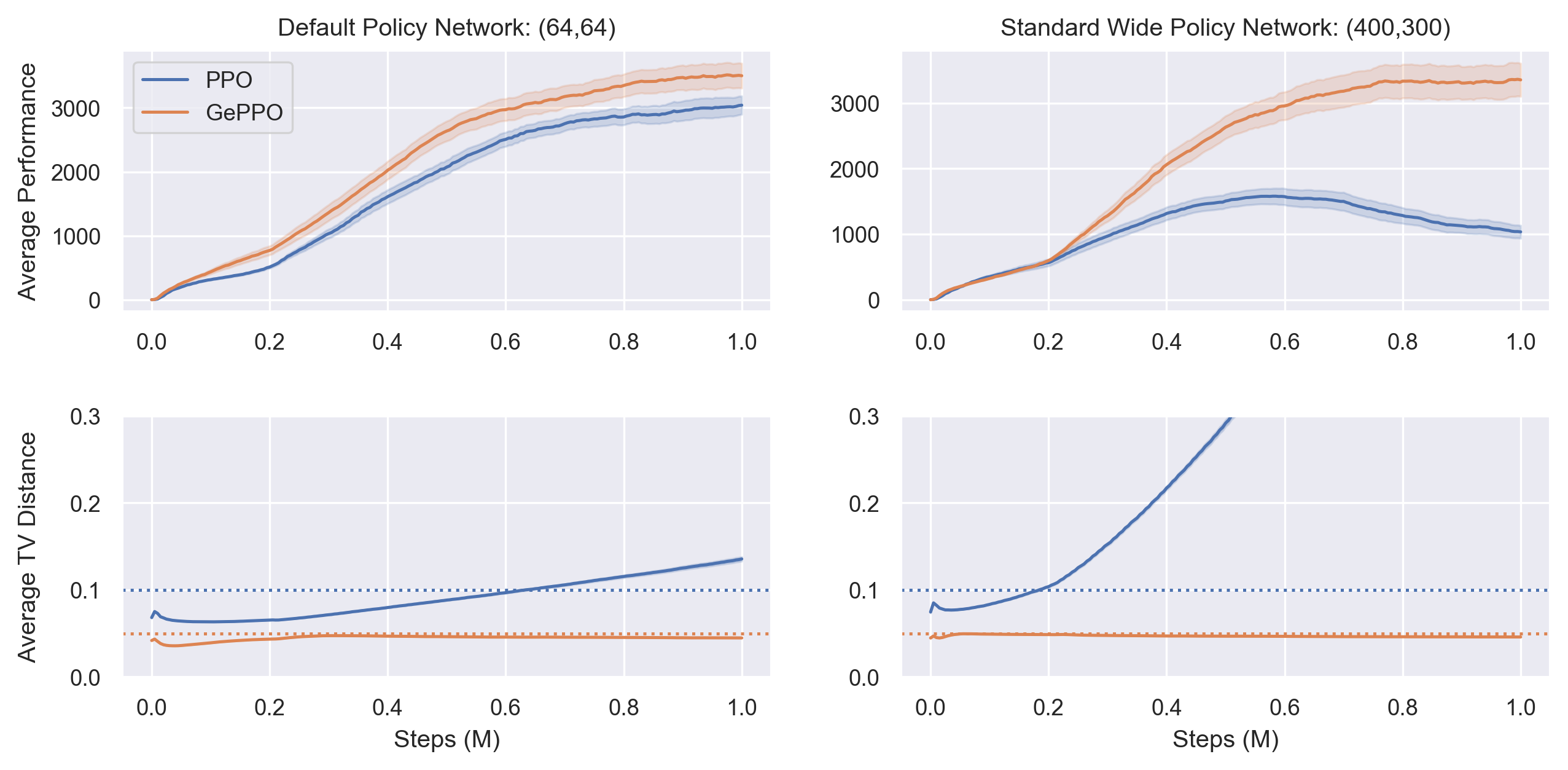}
\caption{Evaluation on Walker2d-v3 with different policy networks. Hidden layer sizes in parentheses. Shading denotes half of one standard error. Top: Performance throughout training. Bottom: Change in average total variation distance per policy update. Horizontal dotted lines represent target total variation distances for PPO and GePPO, respectively.}\label{fig:wide}
\end{figure}


\section{Conclusion}\label{sec:conclusion}

We have presented a principled approach to incorporating off-policy samples into PPO that is theoretically supported by a novel off-policy policy improvement lower bound. Our algorithm, GePPO, improves the sample efficiency of PPO, and can be viewed as a more reliable approach to sample reuse than standard off-policy algorithms that are not based on approximate policy improvement guarantees. This represents an important step towards developing stable, sample efficient reinforcement learning methods that can be applied in high-stakes real-world decision making.

Despite this progress, there remain limitations that must be addressed in order for reinforcement learning to achieve widespread real-world adoption. Because our algorithm is based on PPO, policy improvement guarantees are only approximately achieved due to the use of the clipping mechanism heuristic. In addition, the constant factor in the penalty term on which our algorithm is based may be too large to deliver practical guarantees. Finally, we considered PPO as our starting point due to its theoretical support and stable performance, but there may exist other approaches that more effectively balance the goals of stability and sample efficiency. These represent interesting avenues for future work in order to develop reinforcement learning methods that can be trusted to improve real-world decision making.



\begin{ack}
This research was partially supported by the NSF under grants ECCS-1931600, DMS-1664644, CNS-1645681, and IIS-1914792, by the ONR under grants N00014-19-1-2571 and N00014-21-1-2844, by the NIH under grants R01 GM135930 and UL54 TR004130, by the DOE under grants DE-AR-0001282 and NETL-EE0009696, by AFOSR under grant FA9550-19-1-0158, by the MathWorks, by the Boston University Kilachand Fund for Integrated Life Science and Engineering, and by the Qatar National Research Fund, a member of the Qatar Foundation (the statements made herein are solely the responsibility of the authors).
\end{ack}



\bibliography{NeurIPS2021_GePPO_bib}

\begin{thebibliography}{24}
\providecommand{\natexlab}[1]{#1}
\providecommand{\url}[1]{\texttt{#1}}
\expandafter\ifx\csname urlstyle\endcsname\relax
  \providecommand{\doi}[1]{doi: #1}\else
  \providecommand{\doi}{doi: \begingroup \urlstyle{rm}\Url}\fi

\bibitem[Achiam et~al.(2017)Achiam, Held, Tamar, and Abbeel]{achiam_2017}
J.~Achiam, D.~Held, A.~Tamar, and P.~Abbeel.
\newblock Constrained policy optimization.
\newblock In \emph{Proceedings of the 34th International Conference on Machine
  Learning}, volume~70, pages 22--31. PMLR, 2017.

\bibitem[Andrychowicz et~al.(2021)Andrychowicz, Raichuk, Sta{\'n}czyk, Orsini,
  Girgin, Marinier, Hussenot, Geist, Pietquin, Michalski, Gelly, and
  Bachem]{andrychowicz_2021}
M.~Andrychowicz, A.~Raichuk, P.~Sta{\'n}czyk, M.~Orsini, S.~Girgin,
  R.~Marinier, L.~Hussenot, M.~Geist, O.~Pietquin, M.~Michalski, S.~Gelly, and
  O.~Bachem.
\newblock What matters for on-policy deep actor-critic methods? {A} large-scale
  study.
\newblock In \emph{Ninth International Conference on Learning Representations},
  2021.

\bibitem[Brockman et~al.(2016)Brockman, Cheung, Pettersson, Schneider,
  Schulman, Tang, and Zaremba]{brockman_2016}
G.~Brockman, V.~Cheung, L.~Pettersson, J.~Schneider, J.~Schulman, J.~Tang, and
  W.~Zaremba.
\newblock {OpenAI} {G}ym.
\newblock arXiv preprint, 2016.
\newblock {arXiv:1606.01540}.

\bibitem[Duan et~al.(2016)Duan, Chen, Houthooft, Schulman, and
  Abbeel]{duan_2016}
Y.~Duan, X.~Chen, R.~Houthooft, J.~Schulman, and P.~Abbeel.
\newblock Benchmarking deep reinforcement learning for continuous control.
\newblock In \emph{Proceedings of the 33rd International Conference on Machine
  Learning}, volume~48, pages 1329--1338. PMLR, 2016.

\bibitem[Engstrom et~al.(2020)Engstrom, Ilyas, Santurkar, Tsipras, Janoos,
  Rudolph, and Madry]{engstrom_2020}
L.~Engstrom, A.~Ilyas, S.~Santurkar, D.~Tsipras, F.~Janoos, L.~Rudolph, and
  A.~Madry.
\newblock Implementation matters in deep {RL}: {A} case study on {PPO} and
  {TRPO}.
\newblock In \emph{Eighth International Conference on Learning
  Representations}, 2020.

\bibitem[Espeholt et~al.(2018)Espeholt, Soyer, Munos, Simonyan, Mnih, Ward,
  Doron, Firoiu, Harley, Dunning, Legg, and Kavukcuoglu]{espeholt_2018}
L.~Espeholt, H.~Soyer, R.~Munos, K.~Simonyan, V.~Mnih, T.~Ward, Y.~Doron,
  V.~Firoiu, T.~Harley, I.~Dunning, S.~Legg, and K.~Kavukcuoglu.
\newblock {IMPALA}: {S}calable distributed deep-{RL} with importance weighted
  actor-learner architectures.
\newblock In \emph{Proceedings of the 35th International Conference on Machine
  Learning}, volume~80, pages 1407--1416. PMLR, 2018.

\bibitem[Fakoor et~al.(2020)Fakoor, Chaudhari, and Smola]{fakoor_2020}
R.~Fakoor, P.~Chaudhari, and A.~J. Smola.
\newblock {P3O}: {Policy-on} policy-off policy optimization.
\newblock In \emph{Proceedings of The 35th Uncertainty in Artificial
  Intelligence Conference}, volume 115, pages 1017--1027. PMLR, 2020.

\bibitem[Fujimoto et~al.(2018)Fujimoto, van Hoof, and Meger]{fujimoto_2018}
S.~Fujimoto, H.~van Hoof, and D.~Meger.
\newblock Addressing function approximation error in actor-critic methods.
\newblock In \emph{Proceedings of the 35th International Conference on Machine
  Learning}, volume~80, pages 1587--1596. PMLR, 2018.

\bibitem[Gu et~al.(2017{\natexlab{a}})Gu, Lillicrap, Ghahramani, Turner, and
  Levine]{gu_2017_qprop}
S.~Gu, T.~Lillicrap, Z.~Ghahramani, R.~E. Turner, and S.~Levine.
\newblock {Q-Prop}: {S}ample-efficient policy gradient with an off-policy
  critic.
\newblock In \emph{5th International Conference on Learning Representations},
  2017{\natexlab{a}}.

\bibitem[Gu et~al.(2017{\natexlab{b}})Gu, Lillicrap, Turner, Ghahramani,
  Sch\"{o}lkopf, and Levine]{gu_2017_ipg}
S.~Gu, T.~Lillicrap, R.~E. Turner, Z.~Ghahramani, B.~Sch\"{o}lkopf, and
  S.~Levine.
\newblock Interpolated policy gradient: {M}erging on-policy and off-policy
  gradient estimation for deep reinforcement learning.
\newblock In \emph{Advances in Neural Information Processing Systems},
  volume~30. Curran Associates, Inc., 2017{\natexlab{b}}.

\bibitem[Haarnoja et~al.(2018)Haarnoja, Zhou, Abbeel, and
  Levine]{haarnoja_2018}
T.~Haarnoja, A.~Zhou, P.~Abbeel, and S.~Levine.
\newblock Soft actor-critic: {O}ff-policy maximum entropy deep reinforcement
  learning with a stochastic actor.
\newblock In \emph{Proceedings of the 35th International Conference on Machine
  Learning}, volume~80, pages 1861--1870. PMLR, 2018.

\bibitem[Henderson et~al.(2018)Henderson, Islam, Bachman, Pineau, Precup, and
  Meger]{henderson_2018}
P.~Henderson, R.~Islam, P.~Bachman, J.~Pineau, D.~Precup, and D.~Meger.
\newblock Deep reinforcement learning that matters.
\newblock In \emph{Proceedings of the Thirty-Second {AAAI} Conference on
  Artificial Intelligence}, pages 3207--3214. {AAAI} Press, 2018.

\bibitem[Holubar and Wiering(2020)]{holubar_2020}
M.~S. Holubar and M.~A. Wiering.
\newblock Continuous-action reinforcement learning for playing racing games:
  {C}omparing {SPG} to {PPO}.
\newblock arXiv preprint, 2020.
\newblock {arXiv:2001.05270}.

\bibitem[Kakade and Langford(2002)]{kakade_2002}
S.~Kakade and J.~Langford.
\newblock Approximately optimal approximate reinforcement learning.
\newblock In \emph{Proceedings of the 19th International Conference on Machine
  Learning}, pages 267--274. Morgan Kaufmann Publishers Inc., 2002.

\bibitem[Lillicrap et~al.(2016)Lillicrap, Hunt, Pritzel, Heess, Erez, Tassa,
  Silver, and Wierstra]{lillicrap_2016}
T.~P. Lillicrap, J.~J. Hunt, A.~Pritzel, N.~Heess, T.~Erez, Y.~Tassa,
  D.~Silver, and D.~Wierstra.
\newblock Continuous control with deep reinforcement learning.
\newblock In \emph{4th International Conference on Learning Representations},
  2016.

\bibitem[O'Donoghue et~al.(2017)O'Donoghue, Munos, Kavukcuoglu, and
  Mnih]{odonoghue_2017}
B.~O'Donoghue, R.~Munos, K.~Kavukcuoglu, and V.~Mnih.
\newblock Combining policy gradient and {Q}-learning.
\newblock In \emph{5th International Conference on Learning Representations},
  2017.

\bibitem[Schulman et~al.(2015)Schulman, Levine, Abbeel, Jordan, and
  Moritz]{schulman_2015}
J.~Schulman, S.~Levine, P.~Abbeel, M.~Jordan, and P.~Moritz.
\newblock Trust region policy optimization.
\newblock In \emph{Proceedings of the 32nd International Conference on Machine
  Learning}, volume~37, pages 1889--1897. PMLR, 2015.

\bibitem[Schulman et~al.(2016)Schulman, Moritz, Levine, Jordan, and
  Abbeel]{schulman_2016}
J.~Schulman, P.~Moritz, S.~Levine, M.~I. Jordan, and P.~Abbeel.
\newblock High-dimensional continuous control using generalized advantage
  estimation.
\newblock In \emph{4th International Conference on Learning Representations},
  2016.

\bibitem[Schulman et~al.(2017)Schulman, Wolski, Dhariwal, Radford, and
  Klimov]{schulman_2017}
J.~Schulman, F.~Wolski, P.~Dhariwal, A.~Radford, and O.~Klimov.
\newblock Proximal policy optimization algorithms.
\newblock arXiv preprint, 2017.
\newblock {arXiv:1707.06347}.

\bibitem[Sovrano(2019)]{sovrano_2019}
F.~Sovrano.
\newblock Combining experience replay with exploration by random network
  distillation.
\newblock In \emph{2019 IEEE Conference on Games (CoG)}, pages 1--8, 2019.
\newblock \doi{10.1109/CIG.2019.8848046}.

\bibitem[Todorov et~al.(2012)Todorov, Erez, and Tassa]{todorov_2012}
E.~Todorov, T.~Erez, and Y.~Tassa.
\newblock {MuJoCo}: {A} physics engine for model-based control.
\newblock In \emph{2012 {IEEE/RSJ} International Conference on Intelligent
  Robots and Systems}, pages 5026--5033. {IEEE}, 2012.
\newblock \doi{10.1109/IROS.2012.6386109}.

\bibitem[Wang et~al.(2019)Wang, He, Tan, and Gan]{wang_2019}
Y.~Wang, H.~He, X.~Tan, and Y.~Gan.
\newblock Trust region-guided proximal policy optimization.
\newblock In \emph{Advances in Neural Information Processing Systems},
  volume~32. Curran Associates, Inc., 2019.

\bibitem[Wang et~al.(2020)Wang, He, and Tan]{wang_2020}
Y.~Wang, H.~He, and X.~Tan.
\newblock Truly proximal policy optimization.
\newblock In \emph{Proceedings of The 35th Uncertainty in Artificial
  Intelligence Conference}, volume 115, pages 113--122. PMLR, 2020.

\bibitem[Wang et~al.(2017)Wang, Bapst, Heess, Mnih, Munos, Kavukcuoglu, and
  de~Freitas]{wang_2017}
Z.~Wang, V.~Bapst, N.~Heess, V.~Mnih, R.~Munos, K.~Kavukcuoglu, and
  N.~de~Freitas.
\newblock Sample efficient actor-critic with experience replay.
\newblock In \emph{5th International Conference on Learning Representations},
  2017.

\end{thebibliography}


\newpage

\section*{Appendix}

\appendix


\section{Proofs}\label{sec:app_proofs}

\subsection{Proof of Theorem~\ref{thm:pd_lb_pimix}}

We begin by stating results from \citet{kakade_2002} and \citet{achiam_2017} that we will use in our proof.

\begin{lemma}[\citet{kakade_2002}]\label{lem:pd_equal}
Consider a current policy $\pi_k$. For any future policy $\pi$, we have
\begin{equation}
J(\pi) - J(\pi_k) = \frac{1}{1-\gamma} \mathop{\mathbb{E}}_{s \sim d^{\pi}} \left[ \mathop{\mathbb{E}}_{a \sim \pi(\cdot \mid s)} \left[ \,  A^{\pi_k}(s,a) \, \right] \right].
\end{equation}
\end{lemma}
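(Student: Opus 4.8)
The plan is to establish this identity — the performance difference lemma — by a telescoping argument applied to the discounted sum of advantages along trajectories generated by $\pi$. First I would recast the right-hand side as a discounted expectation over full trajectories. Using the definition $d^{\pi}(s) = (1-\gamma) \sum_{t=0}^{\infty} \gamma^t \mathbb{P}(s_t = s \mid \rho_0, \pi, p)$ together with $d^{\pi}(s,a) = d^{\pi}(s)\pi(a \mid s)$, one obtains
\begin{equation}
\frac{1}{1-\gamma} \mathop{\mathbb{E}}_{s \sim d^{\pi}} \left[ \mathop{\mathbb{E}}_{a \sim \pi(\cdot \mid s)} \left[ A^{\pi_k}(s,a) \right] \right] = \mathop{\mathbb{E}}_{\tau \sim \pi} \left[ \sum_{t=0}^{\infty} \gamma^t A^{\pi_k}(s_t,a_t) \right],
\end{equation}
where the factor $1/(1-\gamma)$ exactly cancels the normalizing constant in $d^{\pi}$. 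It therefore suffices to show the right-hand sum equals $J(\pi) - J(\pi_k)$.

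Next I would substitute the Bellman-style decomposition of the advantage, namely $A^{\pi_k}(s_t,a_t) = r(s_t,a_t) + \gamma \mathop{\mathbb{E}}_{s_{t+1} \sim p(\cdot \mid s_t,a_t)} \left[ V^{\pi_k}(s_{t+1}) \right] - V^{\pi_k}(s_t)$, into the discounted sum. The reward contribution $\mathop{\mathbb{E}}_{\tau \sim \pi} \left[ \sum_t \gamma^t r(s_t,a_t) \right]$ is precisely $J(\pi)$ by definition. Folding the inner transition expectation back into the trajectory expectation, the remaining value-function terms $\sum_t \gamma^t \left( \gamma V^{\pi_k}(s_{t+1}) - V^{\pi_k}(s_t) \right)$ telescope: reindexing the first piece as $\sum_{t \geq 1} \gamma^t V^{\pi_k}(s_t)$ and subtracting $\sum_{t \geq 0} \gamma^t V^{\pi_k}(s_t)$ leaves only $- V^{\pi_k}(s_0)$, whose expectation under $s_0 \sim \rho_0$ is $J(\pi_k)$. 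Combining the two contributions yields $J(\pi) - J(\pi_k)$, as claimed.

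The main obstacle is one of rigor rather than ideas: justifying the interchange of the infinite sum with the trajectory expectation and the legitimacy of telescoping an infinite series. I would handle this by assuming rewards, and hence $V^{\pi_k}$ and $A^{\pi_k}$, are bounded, so that $\sum_t \gamma^t \lvert A^{\pi_k}(s_t,a_t) \rvert$ is dominated by a convergent geometric series; dominated convergence (or Fubini) then licenses both the exchange of sum and expectation and the vanishing of the telescoping tail, whose contribution decays like $\gamma^T \to 0$. A secondary point of care is the opening identity, which rests on recognizing $\sum_t \gamma^t \mathbb{P}(s_t = s \mid \rho_0, \pi, p)$ as $d^{\pi}(s)/(1-\gamma)$ and pairing it with $\pi(a \mid s)$; this is immediate from the stated definitions but worth writing out to confirm the $1/(1-\gamma)$ factor.
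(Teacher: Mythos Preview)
Your argument is correct and is the standard telescoping proof of the performance difference lemma. Note, however, that the paper does not supply its own proof of this statement: Lemma~\ref{lem:pd_equal} is simply quoted from \citet{kakade_2002} as an ingredient for the proof of Theorem~\ref{thm:pd_lb_pimix}, so there is no in-paper derivation to compare against. Your proof matches the original Kakade--Langford argument: rewrite the normalized visitation expectation as a discounted trajectory sum, expand $A^{\pi_k}(s_t,a_t) = r(s_t,a_t) + \gamma \mathbb{E}[V^{\pi_k}(s_{t+1})] - V^{\pi_k}(s_t)$, identify the reward piece as $J(\pi)$, and telescope the value terms to $-\mathbb{E}_{s_0\sim\rho_0}[V^{\pi_k}(s_0)] = -J(\pi_k)$. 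The boundedness-based justification you give for exchanging the infinite sum with the expectation and for the vanishing telescoping tail is the appropriate level of rigor here.
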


\begin{lemma}[\citet{achiam_2017}]\label{lem:tv_stat}
Consider a reference policy $\pi_{\textnormal{ref}}$ and a future policy $\pi$. Then, the total variation distance between the state visitation distributions $d^{\pi_{\textnormal{ref}}}$ and $d^\pi$ is bounded by
\begin{equation}
\operatorname{TV}(d^\pi,d^{\pi_{\textnormal{ref}}}) \leq \frac{\gamma}{1-\gamma} \mathop{\mathbb{E}}_{s \sim d^{\pi_{\textnormal{ref}}}} \left[ \operatorname{TV}(\pi,\pi_{\textnormal{ref}})(s) \right],
\end{equation}
where $\operatorname{TV}(\pi,\pi_{\textnormal{ref}})(s)$ is defined as in Lemma~\ref{lem:pd_lb}.
\end{lemma}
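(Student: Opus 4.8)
The plan is to represent each discounted state visitation distribution as the solution of a linear fixed-point equation and then compare the two solutions through a resolvent identity. Introduce the state-to-state transition operator $P^{\pi}(s' \mid s) = \sum_{a} \pi(a \mid s)\, p(s' \mid s,a)$ induced by a policy $\pi$. This operator is stochastic, hence nonexpansive in the $\ell_1$ norm: $\|P^{\pi} x\|_1 \le \|x\|_1$ for every vector $x$, because $\sum_{s'} P^{\pi}(s' \mid s) = 1$. Writing the state marginal at time $t$ as $(P^{\pi})^t \rho_0$ and summing the geometric series that defines $d^{\pi}$, I would record the closed form $d^{\pi} = (1-\gamma)(I - \gamma P^{\pi})^{-1}\rho_0$, which is valid since $\gamma P^{\pi}$ has spectral radius below one.

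Next I would apply the resolvent identity to the operators $I-\gamma P^{\pi}$ and $I - \gamma P^{\pi_{\textnormal{ref}}}$. Setting $\Delta = P^{\pi} - P^{\pi_{\textnormal{ref}}}$ and using $(I-\gamma P^{\pi_{\textnormal{ref}}})^{-1}\rho_0 = \tfrac{1}{1-\gamma}\, d^{\pi_{\textnormal{ref}}}$, this yields the exact expression
\begin{equation}
d^{\pi} - d^{\pi_{\textnormal{ref}}} = \gamma\, (I - \gamma P^{\pi})^{-1}\, \Delta\, d^{\pi_{\textnormal{ref}}}.
\end{equation}
This is the heart of the argument: it converts the difference of two nonlinearly related visitation distributions into a single application of the difference operator $\Delta$ to the reference distribution, with the inverse operator $(I-\gamma P^{\pi})^{-1}$ carrying only a controllable amplification factor.

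To finish, I would take $\ell_1$ norms. Expanding $(I-\gamma P^{\pi})^{-1} = \sum_{t\ge 0}\gamma^t (P^{\pi})^t$ and using the nonexpansiveness of each power $(P^{\pi})^t$ gives the operator bound $\|(I-\gamma P^{\pi})^{-1} y\|_1 \le \tfrac{1}{1-\gamma}\|y\|_1$, which accounts for the $\tfrac{\gamma}{1-\gamma}$ prefactor. The remaining factor $\|\Delta\, d^{\pi_{\textnormal{ref}}}\|_1$ I would bound by pushing the absolute value inside the sums over next states and actions; since $\sum_{s'} p(s' \mid s,a) = 1$, the transition kernel drops out and one is left with $\sum_{s} d^{\pi_{\textnormal{ref}}}(s)\sum_{a}|\pi(a\mid s) - \pi_{\textnormal{ref}}(a\mid s)| = 2\,\mathbb{E}_{s\sim d^{\pi_{\textnormal{ref}}}}\!\left[\operatorname{TV}(\pi,\pi_{\textnormal{ref}})(s)\right]$. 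Combining the two estimates and dividing by $2$, using $\operatorname{TV}(d^{\pi},d^{\pi_{\textnormal{ref}}}) = \tfrac{1}{2}\|d^{\pi}-d^{\pi_{\textnormal{ref}}}\|_1$, produces the claimed inequality.

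The main obstacle is the second step: establishing the resolvent identity cleanly and justifying the operator manipulations, namely convergence of the Neumann series and legitimacy of the inverse on a possibly infinite state space. Once that identity is in place, the $\ell_1$ estimates are routine, resting only on the stochasticity of $P^{\pi}$ and the definition of total variation distance. An equivalent route that avoids explicit inverses would telescope the per-timestep marginals $(P^{\pi})^t\rho_0 - (P^{\pi_{\textnormal{ref}}})^t\rho_0$ and sum the resulting bounds against $\gamma^t$, but the resolvent form is more compact and makes the amplification factor transparent.
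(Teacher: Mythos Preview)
Your proposal is correct. The paper does not supply its own proof of this lemma---it merely cites \citet{achiam_2017}---and your resolvent-identity argument is essentially the standard derivation given there, which likewise represents $d^{\pi}=(1-\gamma)(I-\gamma P^{\pi})^{-1}\rho_0$, applies the perturbation identity, and bounds the $\ell_1$ operator norm via the Neumann series.
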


Using these results, we first generalize Lemma~\ref{lem:pd_lb} to depend on expectations with respect to any reference policy:

\begin{lemma}\label{lem:pd_lb_piref}
Consider a current policy $\pi_k$, and any reference policy $\pi_{\textnormal{ref}}$. For any future policy $\pi$, we have
\begin{equation}\label{eq:pd_lb_piref}
J(\pi) - J(\pi_k) \geq \frac{1}{1-\gamma} \mathop{\mathbb{E}}_{(s,a) \sim d^{\pi_{\textnormal{ref}}}} \left[ \frac{\pi(a \mid s)}{\pi_{\textnormal{ref}}(a \mid s)} A^{\pi_k}(s,a) \right] - \frac{2\gamma C^{\pi,\pi_k}}{(1-\gamma)^2} \mathop{\mathbb{E}}_{s \sim d^{\pi_{\textnormal{ref}}}} \left[ \operatorname{TV}(\pi,\pi_{\textnormal{ref}})(s) \right],
\end{equation}
where $C^{\pi,\pi_k}$ and $\operatorname{TV}(\pi,\pi_{\textnormal{ref}})(s)$ are defined as in Lemma~\ref{lem:pd_lb}. 
\end{lemma}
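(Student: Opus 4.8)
The plan is to generalize Achiam's proof of Lemma~\ref{lem:pd_lb} by replacing the current policy $\pi_k$ with an arbitrary reference policy $\pi_{\textnormal{ref}}$ in the visitation distribution, while keeping the advantage function tied to $\pi_k$. The starting point is the exact performance-difference identity in Lemma~\ref{lem:pd_equal}, which expresses $J(\pi)-J(\pi_k)$ as an expectation of the per-state expected advantage $\bar{A}(s) := \mathop{\mathbb{E}}_{a \sim \pi(\cdot \mid s)}[A^{\pi_k}(s,a)]$ taken under the visitation distribution $d^\pi$. Since this is an exact equality rather than a bound, the entire task reduces to exchanging the sampling distribution from $d^\pi$ to $d^{\pi_{\textnormal{ref}}}$ and controlling the error incurred.

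First I would add and subtract the same quantity evaluated under $d^{\pi_{\textnormal{ref}}}$, writing
\[
\mathop{\mathbb{E}}_{s \sim d^{\pi}}[\bar{A}(s)] = \mathop{\mathbb{E}}_{s \sim d^{\pi_{\textnormal{ref}}}}[\bar{A}(s)] + \Big( \mathop{\mathbb{E}}_{s \sim d^{\pi}}[\bar{A}(s)] - \mathop{\mathbb{E}}_{s \sim d^{\pi_{\textnormal{ref}}}}[\bar{A}(s)] \Big).
\]
For the leading term, I would rewrite the inner expectation over $a \sim \pi(\cdot \mid s)$ as an importance-weighted expectation over $a \sim \pi_{\textnormal{ref}}(\cdot \mid s)$, producing the probability ratio $\nicefrac{\pi(a \mid s)}{\pi_{\textnormal{ref}}(a \mid s)}$ and yielding exactly the surrogate objective in the statement. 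For the bracketed error term, I would apply the standard inequality $|\mathop{\mathbb{E}}_p[f] - \mathop{\mathbb{E}}_q[f]| \leq 2 \operatorname{TV}(p,q)\,\|f\|_\infty$, identifying $\|\bar{A}\|_\infty = \max_s |\bar{A}(s)| = C^{\pi,\pi_k}$ by definition, so the error is bounded in magnitude by $2 C^{\pi,\pi_k} \operatorname{TV}(d^\pi, d^{\pi_{\textnormal{ref}}})$. Retaining only the lower side of this two-sided bound is what converts the equality into the desired inequality.

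The final step is to control the total variation distance between the two state visitation distributions using Lemma~\ref{lem:tv_stat}, which bounds $\operatorname{TV}(d^\pi, d^{\pi_{\textnormal{ref}}})$ by $\frac{\gamma}{1-\gamma}$ times the expected per-state total variation distance under $d^{\pi_{\textnormal{ref}}}$. Substituting this in and absorbing the common prefactor $\frac{1}{1-\gamma}$ from Lemma~\ref{lem:pd_equal} produces the penalty coefficient $\frac{2\gamma C^{\pi,\pi_k}}{(1-\gamma)^2}$, matching the claimed bound.

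I do not expect a genuine obstacle here, since this is essentially a bookkeeping generalization of an existing proof; the only points requiring care are recognizing that $C^{\pi,\pi_k}$ is precisely the sup-norm of $\bar{A}$ and tracking the factor of two in the total variation bound correctly. It is worth emphasizing that the advantage is always evaluated under $\pi_k$ throughout, so the generalization to $\pi_{\textnormal{ref}}$ affects only the sampling distributions and the probability ratio — this is exactly what later permits averaging over many reference policies $\pi_{k-i}$ in Theorem~\ref{thm:pd_lb_pimix} while retaining a valid lower bound on $J(\pi)-J(\pi_k)$.
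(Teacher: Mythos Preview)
Your proposal is correct and follows essentially the same route as the paper's proof: start from the performance-difference identity, add and subtract the expectation under $d^{\pi_{\textnormal{ref}}}$, bound the residual via the $\ell_1$--$\ell_\infty$ (H{\"o}lder) inequality with $\|\bar{A}\|_\infty = C^{\pi,\pi_k}$, and then invoke Lemma~\ref{lem:tv_stat} to convert $\operatorname{TV}(d^\pi,d^{\pi_{\textnormal{ref}}})$ into an expected per-state total variation distance. The paper also notes the mild support assumption needed for the importance-sampling rewrite of the surrogate term, which you may want to state explicitly.
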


\begin{proof}
The proof is similar to the proof of Lemma~\ref{lem:pd_lb} in \citet{achiam_2017}. Starting from the equality in Lemma~\ref{lem:pd_equal}, we add and subtract the term
\begin{equation}
\frac{1}{1-\gamma} \mathop{\mathbb{E}}_{s \sim d^{\pi_{\textnormal{ref}}}} \left[ \mathop{\mathbb{E}}_{a \sim \pi(\cdot \mid s)} \left[ \,  A^{\pi_k}(s,a) \, \right] \right].
\end{equation}
By doing so, we have
\begin{equation}\label{eq:pd_lb_piref_start}
\begin{split}
J(\pi) - J(\pi_k) &= \frac{1}{1-\gamma} \mathop{\mathbb{E}}_{s \sim d^{\pi_{\textnormal{ref}}}} \left[ \mathop{\mathbb{E}}_{a \sim \pi(\cdot \mid s)} \left[ \,  A^{\pi_k}(s,a) \, \right] \right] \\
& \qquad + \frac{1}{1-\gamma} \left(  \mathop{\mathbb{E}}_{s \sim d^{\pi}} \left[ \mathop{\mathbb{E}}_{a \sim \pi(\cdot \mid s)} \left[ \,  A^{\pi_k}(s,a) \, \right] \right] - \mathop{\mathbb{E}}_{s \sim d^{\pi_{\textnormal{ref}}}} \left[ \mathop{\mathbb{E}}_{a \sim \pi(\cdot \mid s)} \left[ \,  A^{\pi_k}(s,a) \, \right] \right] \right) \\
&\geq \frac{1}{1-\gamma} \mathop{\mathbb{E}}_{s \sim d^{\pi_{\textnormal{ref}}}} \left[ \mathop{\mathbb{E}}_{a \sim \pi(\cdot \mid s)} \left[ \,  A^{\pi_k}(s,a) \, \right] \right] \\
& \qquad - \frac{1}{1-\gamma} \left|  \mathop{\mathbb{E}}_{s \sim d^{\pi}} \left[ \mathop{\mathbb{E}}_{a \sim \pi(\cdot \mid s)} \left[ \,  A^{\pi_k}(s,a) \, \right] \right] - \mathop{\mathbb{E}}_{s \sim d^{\pi_{\textnormal{ref}}}} \left[ \mathop{\mathbb{E}}_{a \sim \pi(\cdot \mid s)} \left[ \,  A^{\pi_k}(s,a) \, \right] \right] \right|.
\end{split}
\end{equation}
We can bound the second term in \eqref{eq:pd_lb_piref_start} using H{\"o}lder's inequality:
\begin{multline}
\frac{1}{1-\gamma} \left|  \mathop{\mathbb{E}}_{s \sim d^{\pi}} \left[ \mathop{\mathbb{E}}_{a \sim \pi(\cdot \mid s)} \left[ \,  A^{\pi_k}(s,a) \, \right] \right] - \mathop{\mathbb{E}}_{s \sim d^{\pi_{\textnormal{ref}}}} \left[ \mathop{\mathbb{E}}_{a \sim \pi(\cdot \mid s)} \left[ \,  A^{\pi_k}(s,a) \, \right] \right] \right| \\ \leq \frac{1}{1-\gamma} \left\Vert d^{\pi} - d^{\pi_{\textnormal{ref}}} \right\Vert_1 \left\Vert \mathop{\mathbb{E}}_{a \sim \pi(\cdot \mid s)} \left[ \,  A^{\pi_k}(s,a) \, \right] \right\Vert_\infty,
\end{multline}
where $d^{\pi}$ and $d^{\pi_{\textnormal{ref}}}$ represent state visitation distributions. From the definition of total variation distance and Lemma~\ref{lem:tv_stat}, we have
\begin{equation}
\left\Vert d^{\pi} - d^{\pi_{\textnormal{ref}}} \right\Vert_1 = 2 \, \operatorname{TV}(d^\pi,d^{\pi_{\textnormal{ref}}}) \leq \frac{2\gamma}{1-\gamma} \mathop{\mathbb{E}}_{s \sim d^{\pi_{\textnormal{ref}}}} \left[ \operatorname{TV}(\pi,\pi_{\textnormal{ref}})(s) \right].
\end{equation}
Also note that
\begin{equation}
\left\Vert \mathop{\mathbb{E}}_{a \sim \pi(\cdot \mid s)} \left[ \,  A^{\pi_k}(s,a) \, \right] \right\Vert_\infty = \max_{s \in \mathcal{S}} \left| \mathop{\mathbb{E}}_{a \sim \pi(\cdot \mid s)} \left[ A^{\pi_k}(s,a) \right] \right| = C^{\pi,\pi_k}.
\end{equation}
As a result, we have that
\begin{equation}\label{eq:pd_lb_piref_almost}
J(\pi) - J(\pi_k) \geq \frac{1}{1-\gamma} \mathop{\mathbb{E}}_{s \sim d^{\pi_{\textnormal{ref}}}} \left[ \mathop{\mathbb{E}}_{a \sim \pi(\cdot \mid s)} \left[ \,  A^{\pi_k}(s,a) \, \right] \right] - \frac{2\gamma C^{\pi,\pi_k}}{(1-\gamma)^2} \mathop{\mathbb{E}}_{s \sim d^{\pi_{\textnormal{ref}}}} \left[ \operatorname{TV}(\pi,\pi_{\textnormal{ref}})(s) \right].
\end{equation}
Finally, assume that the support of $\pi$ is contained in the support of $\pi_{\textnormal{ref}}$ for all states, which is true for common policy representations used in policy optimization. Then, we can rewrite the first term on the right-hand side of \eqref{eq:pd_lb_piref_almost} as
\begin{equation}
\frac{1}{1-\gamma} \mathop{\mathbb{E}}_{s \sim d^{\pi_{\textnormal{ref}}}} \left[ \mathop{\mathbb{E}}_{a \sim \pi(\cdot \mid s)} \left[ \,  A^{\pi_k}(s,a) \, \right] \right] = \frac{1}{1-\gamma} \mathop{\mathbb{E}}_{(s,a) \sim d^{\pi_{\textnormal{ref}}}} \left[ \frac{\pi(a \mid s)}{\pi_{\textnormal{ref}}(a \mid s)} A^{\pi_k}(s,a) \right],
\end{equation}
which results in the lower bound in \eqref{eq:pd_lb_piref}.
\end{proof}

We are now ready to prove Theorem~\ref{thm:pd_lb_pimix}:

\begin{proof}[Proof of Theorem~\ref{thm:pd_lb_pimix}]
Consider prior policies $\pi_{k-i}$, $i=0,\ldots,M-1$. For each prior policy, assume that the support of $\pi$ is contained in the support of $\pi_{k-i}$ for all states, which is true for common policy representations used in policy optimization. Then, by Lemma~\ref{lem:pd_lb_piref} we have
\begin{equation}\label{eq:pd_lb_piprior}
J(\pi) - J(\pi_k) \geq \frac{1}{1-\gamma} \mathop{\mathbb{E}}_{(s,a) \sim d^{\pi_{k-i}}} \left[ \frac{\pi(a \mid s)}{\pi_{k-i}(a \mid s)} A^{\pi_k}(s,a) \right] - \frac{2 \gamma C^{\pi,\pi_k}}{(1-\gamma)^2} \mathop{\mathbb{E}}_{s \sim d^{\pi_{k-i}}} \left[ \operatorname{TV}(\pi,\pi_{k-i})(s) \right]
\end{equation}
for each $\pi_{k-i}$, $i=0,\ldots,M-1$. Consider policy weights $\nu = \left[ \nu_0 \cdots \nu_{M-1} \right]$ over the last $M$ policies, where $\nu$ is a distribution. Then, for any choice of distribution $\nu$, the convex combination determined by $\nu$ of the $M$ lower bounds given by \eqref{eq:pd_lb_piprior} results in the lower bound
\begin{multline}
J(\pi) - J(\pi_k) \geq \frac{1}{1-\gamma} \mathop{\mathbb{E}}_{i \sim \nu} \left[ \mathop{\mathbb{E}}_{(s,a) \sim d^{\pi_{k-i}}} \left[ \frac{\pi(a \mid s)}{\pi_{k-i}(a \mid s)} A^{\pi_k}(s,a) \right] \right] \\ - \frac{2 \gamma C^{\pi,\pi_k}}{(1-\gamma)^2} \mathop{\mathbb{E}}_{i \sim \nu} \left[  \mathop{\mathbb{E}}_{s \sim d^{\pi_{k-i}}} \left[ \operatorname{TV}(\pi,\pi_{k-i})(s) \right] \right].
\end{multline}
\end{proof}

\subsection{Proof of Lemma~\ref{lem:tv_polrat_pik}}

\begin{proof}
From the definition of total variation distance, we have that
\begin{equation}
\mathop{\mathbb{E}}_{s \sim d^{\pi_k}} \left[ \operatorname{TV}(\pi,\pi_k)(s) \right] = \mathop{\mathbb{E}}_{s \sim d^{\pi_k}} \left[ \frac{1}{2} \int_{a \in \mathcal{A}} \left| \pi(a \mid s) - \pi_k(a \mid s)  \right| \mathrm{d}a \right].
\end{equation}
Assume that the support of $\pi$ is contained in the support of $\pi_k$ for all states, which is true for common policy representations used in policy optimization. Then, by multiplying and dividing by $\pi_k(a \mid s)$, we see that
\begin{equation}
\begin{split}
\mathop{\mathbb{E}}_{s \sim d^{\pi_k}} \left[ \operatorname{TV}(\pi,\pi_k)(s) \right] &=  \mathop{\mathbb{E}}_{s \sim d^{\pi_k}} \left[ \frac{1}{2} \int_{a \in \mathcal{A}} \pi_k(a \mid s) \left| \frac{\pi(a \mid s)}{\pi_k(a \mid s)} - 1  \right| \mathrm{d}a \right] \\
&= \frac{1}{2} \mathop{\mathbb{E}}_{(s,a) \sim d^{\pi_k}} \left[ \, \left| \frac{\pi(a \mid s)}{\pi_k(a \mid s)} - 1  \right| \, \right].
\end{split}
\end{equation}

\end{proof}

\subsection{Proof of Lemma~\ref{lem:tv_polrat_pimix}}

\begin{proof}
From the definition of total variation distance, we have that
\begin{equation}
\mathop{\mathbb{E}}_{i \sim \nu} \left[ \mathop{\mathbb{E}}_{s \sim d^{\pi_{k-i}}} \left[ \operatorname{TV}(\pi,\pi_{k})(s) \right] \right] = \mathop{\mathbb{E}}_{i \sim \nu} \left[ \mathop{\mathbb{E}}_{s \sim d^{\pi_{k-i}}} \left[ \frac{1}{2} \int_{a \in \mathcal{A}} \left| \pi(a \mid s) - \pi_k(a \mid s)  \right| \mathrm{d}a \right] \right].
\end{equation}
For $i=0,\ldots,M-1$, assume that the supports of $\pi$ and $\pi_k$ are contained in the support of $\pi_{k-i}$ for all states, which is true for common policy representations used in policy optimization. Then, by multiplying and dividing by $\pi_{k-i}(a \mid s)$, we see that
\begin{equation}
\begin{split}
&\mathop{\mathbb{E}}_{i \sim \nu} \left[ \mathop{\mathbb{E}}_{s \sim d^{\pi_{k-i}}} \left[ \operatorname{TV}(\pi,\pi_{k})(s) \right] \right] \\
& \qquad \qquad \qquad =  \mathop{\mathbb{E}}_{i \sim \nu} \left[ \mathop{\mathbb{E}}_{s \sim d^{\pi_{k-i}}} \left[ \frac{1}{2} \int_{a \in \mathcal{A}} \pi_{k-i}(a \mid s) \left| \frac{\pi(a \mid s)}{\pi_{k-i}(a \mid s)} - \frac{\pi_k(a \mid s)}{\pi_{k-i}(a \mid s)}  \right| \mathrm{d}a \right] \right] \\
& \qquad \qquad \qquad = \frac{1}{2} \mathop{\mathbb{E}}_{i \sim \nu} \left[ \mathop{\mathbb{E}}_{(s,a) \sim d^{\pi_{k-i}}} \left[ \, \left| \frac{\pi(a \mid s)}{\pi_{k-i}(a \mid s)} - \frac{\pi_k(a \mid s)}{\pi_{k-i}(a \mid s)}  \right| \, \right] \right].
\end{split}
\end{equation}

\end{proof}

\subsection{Proof of Lemma~\ref{lem:geppo_param}}

\begin{proof}
From Lemma~\ref{lem:tv_polrat_pik}, PPO approximately bounds the penalty term in the standard policy improvement lower bound in Lemma~\ref{lem:pd_lb} by
\begin{equation}\label{eq:ppo_pen_bound}
\frac{2\gamma C^{\pi,\pi_k}}{(1-\gamma)^2} \mathop{\mathbb{E}}_{s \sim d^{\pi_k}} \left[ \operatorname{TV}(\pi,\pi_k)(s) \right] \leq \frac{2\gamma C^{\pi,\pi_k}}{(1-\gamma)^2} \cdot \frac{\epsilon^{\textnormal{PPO}}}{2}.
\end{equation}
Using the triangle inequality for total variation distance, we see that the penalty term in the generalized policy improvement lower bound in Theorem~\ref{thm:pd_lb_pimix} can be bounded by
\begin{multline}\label{eq:geppo_pen_triangle}
\frac{2 \gamma C^{\pi,\pi_k}}{(1-\gamma)^2} \mathop{\mathbb{E}}_{i \sim \nu} \left[  \mathop{\mathbb{E}}_{s \sim d^{\pi_{k-i}}} \left[ \operatorname{TV}(\pi,\pi_{k-i})(s) \right] \right] \\ \leq \frac{2 \gamma C^{\pi,\pi_k}}{(1-\gamma)^2} \mathop{\mathbb{E}}_{i \sim \nu} \left[ \sum_{j=0}^{i} \mathop{\mathbb{E}}_{s \sim d^{\pi_{k-i}}} \left[ \operatorname{TV}(\pi_{k-j+1},\pi_{k-j})(s) \right] \right],
\end{multline}
where we have written the future policy $\pi$ as $\pi_{k+1}$ on the right-hand side. Note that policy updates in GePPO approximately bound each expected total variation distance that appears on the right-hand side of \eqref{eq:geppo_pen_triangle} by $\nicefrac{\epsilon^{\textnormal{GePPO}}}{2}$. Therefore, the penalty term in the generalized policy improvement lower bound is approximately bounded by
\begin{equation}\label{eq:geppo_pen_bound}
\begin{split}
\frac{2 \gamma C^{\pi,\pi_k}}{(1-\gamma)^2} \mathop{\mathbb{E}}_{i \sim \nu} \left[  \mathop{\mathbb{E}}_{s \sim d^{\pi_{k-i}}} \left[ \operatorname{TV}(\pi,\pi_{k-i})(s) \right] \right] & \leq \frac{2 \gamma C^{\pi,\pi_k}}{(1-\gamma)^2} \mathop{\mathbb{E}}_{i \sim \nu} \left[ \frac{\epsilon^{\textnormal{GePPO}}}{2} \cdot (i+1)  \right] \\
&= \frac{2 \gamma C^{\pi,\pi_k}}{(1-\gamma)^2} \cdot \frac{\epsilon^{\textnormal{GePPO}}}{2} \cdot \mathop{\mathbb{E}}_{i \sim \nu} \left[ \, i+1 \, \right].
\end{split}
\end{equation}
By comparing the bounds in \eqref{eq:ppo_pen_bound} and \eqref{eq:geppo_pen_bound}, we see that the worst-case expected performance loss at every update is the same for PPO and GePPO when
\begin{equation}
\epsilon^{\textnormal{GePPO}} = \frac{\epsilon^{\textnormal{PPO}}}{ \mathop{\mathbb{E}}_{i \sim \nu} \left[ \, i + 1 \, \right]}.
\end{equation}
\end{proof}

\subsection{Proof of Theorem~\ref{thm:geppo_tv}}

\begin{proof}
For $M=B$ with uniform policy weights, we see by Lemma~\ref{lem:geppo_param} that
\begin{equation}
\epsilon^{\textnormal{GePPO}} = \frac{\epsilon^{\textnormal{PPO}}}{ \frac{1}{B} \sum_{i=0}^{B-1} \left( i + 1 \right)} = \frac{2}{B+1} \cdot \epsilon^{\textnormal{PPO}}.
\end{equation}
PPO makes one policy update per $N=Bn$ samples collected, which results in a policy change of $\nicefrac{\epsilon^{\textnormal{PPO}}}{2}$ in terms of total variation distance. By leveraging data from prior policies to obtain $N$ samples per update, GePPO makes $B$ policy updates per $N$ samples collected. This results in an overall policy change of
\begin{equation}
B \cdot \frac{\epsilon^{\textnormal{GePPO}}}{2} = \frac{2B}{B+1} \cdot \frac{\epsilon^{\textnormal{PPO}}}{2}
\end{equation}
in terms of total variation distance for every $N$ samples collected. Therefore, GePPO increases the change in total variation distance of the policy throughout training by a factor of $\nicefrac{2B}{(B+1)}$ compared to PPO, while using the same number of samples for each policy update.
\end{proof}

\subsection{Proof of Theorem~\ref{thm:geppo_ess}}

\begin{proof}
Because $M = 2B-1$, GePPO uses $(2B-1)n$ samples to compute each policy update, compared to $N=Bn$ samples used in PPO. Therefore, GePPO increases the sample size used for each policy update by a factor of $\nicefrac{(2B-1)}{B}$ compared to PPO.

For $M=2B-1$ with uniform policy weights, we see by Lemma~\ref{lem:geppo_param} that
\begin{equation}
\epsilon^{\textnormal{GePPO}} = \frac{\epsilon^{\textnormal{PPO}}}{ \frac{1}{2B-1} \sum_{i=0}^{2B-2} \left( i + 1 \right)} = \frac{\epsilon^{\textnormal{PPO}}}{B}.
\end{equation}
As in Theorem~\ref{thm:geppo_tv}, PPO makes one policy update per $N$ samples collected, while GePPO makes $B$ policy updates per $N$ samples collected. This results in an overall change in total variation distance of
\begin{equation}
B \cdot \frac{\epsilon^{\textnormal{GePPO}}}{2} = \frac{\epsilon^{\textnormal{PPO}}}{2},
\end{equation}
which is the same as in PPO.
\end{proof}


\section{Optimal policy weights}\label{sec:app_weights}

In Section~\ref{sec:analysis}, we demonstrated the benefits of our algorithm with uniform policy weights, i.e., $\nu_i = \nicefrac{1}{M}$ for $i=0,\ldots,M-1$. Because our generalized policy improvement lower bound holds for any choice of policy weights $\nu$, we can improve upon the results in Theorem~\ref{thm:geppo_tv} and Theorem~\ref{thm:geppo_ess} by optimizing $\nu$.

Non-uniform policy weights introduce an additional source of variance, so in order to account for this we must extend the notion of sample size to effective sample size. The effective sample size represents the number of uniformly-weighted samples that result in the same level of variance. For $n$ samples collected under each of the prior $M$ policies, the effective sample size used in the empirical objective of GePPO with policy weights $\nu$ can be written as
\begin{equation}\label{eq:ess}
\operatorname{ESS}^\textnormal{GePPO} = \frac{n}{\sum_{i=0}^{M-1} \nu_i^2}.
\end{equation}
Note that the effective sample size of GePPO with non-uniform policy weights is always smaller than the true number of samples used to calculate the empirical objective, i.e., $\operatorname{ESS}^\textnormal{GePPO} < Mn$ unless $\nu$ is the uniform distribution.

By Lemma~\ref{lem:geppo_param}, GePPO results in an overall policy change of
\begin{equation}\label{eq:tvchange_gen}
B \cdot \frac{\epsilon^{\textnormal{GePPO}}}{2} = \frac{B}{\sum_{i=0}^{M-1} \nu_i(i+1)} \cdot \frac{\epsilon^{\textnormal{PPO}}}{2}
\end{equation}
in terms of total variation distance for every $N=Bn$ samples collected, as long as the effective sample size of GePPO is at least $N$. Using the general forms of effective sample size in \eqref{eq:ess} and total variation distance change in \eqref{eq:tvchange_gen}, we can optimize the results in Theorem~\ref{thm:geppo_tv} as follows:

\begin{theorem}\label{thm:geppo_tvopt}
Let $\bar{M} \geq B$ be the maximum number of prior policies. Consider the goal of maximizing the change in total variation distance of the policy with GePPO while maintaining the same effective sample size of $N=Bn$ used in PPO. Then, the policy weights $\nu$ that achieve this goal are the optimal solution to the convex optimization problem
\begin{equation}\label{eq:geppo_tvopt}
\begin{split}
\min_{\nu_0,\ldots,\nu_{\bar{M}-1}} & \sum_{i=0}^{\bar{M}-1} \nu_i (i+1)  \\
\mathrm{s.t.} \quad & \, \sum_{i=0}^{\bar{M}-1} \nu_i^2 \leq \frac{1}{B}, \quad  \sum_{i=0}^{\bar{M}-1} \nu_i = 1,  \\
 & \, \nu_i \geq 0, \, i=0,\ldots,\bar{M}-1. \\
\end{split}
\end{equation}
\end{theorem}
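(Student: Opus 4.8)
The plan is to assemble the theorem directly from the two quantities already derived in this section, namely the effective sample size in \eqref{eq:ess} and the total variation distance change in \eqref{eq:tvchange_gen}, and then to observe that the resulting program is convex. The key point is that both of these expressions are explicit functions of the policy weights $\nu$, so the optimization objective and constraints can be read off immediately without any additional machinery.

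First, I would translate the requirement of maintaining the PPO effective sample size into an inequality constraint on $\nu$. Since GePPO collects $n$ samples under each of $\bar{M}$ prior policies, the effective sample size is $\operatorname{ESS}^{\textnormal{GePPO}} = n / \sum_{i=0}^{\bar{M}-1} \nu_i^2$ by \eqref{eq:ess}. Demanding that this be at least the PPO value $N = Bn$, which also guarantees that the change formula \eqref{eq:tvchange_gen} is valid, yields $n / \sum_{i=0}^{\bar{M}-1} \nu_i^2 \geq Bn$, i.e., $\sum_{i=0}^{\bar{M}-1} \nu_i^2 \leq 1/B$. This is exactly the first constraint in \eqref{eq:geppo_tvopt}.

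Next, I would express the objective. By \eqref{eq:tvchange_gen}, the overall change in total variation distance per $N$ samples collected equals $\frac{B}{\sum_{i=0}^{\bar{M}-1} \nu_i(i+1)} \cdot \frac{\epsilon^{\textnormal{PPO}}}{2}$. Because $B$ and $\epsilon^{\textnormal{PPO}}$ are fixed constants that do not depend on $\nu$, and the denominator is positive, maximizing this quantity over $\nu$ is equivalent to minimizing $\sum_{i=0}^{\bar{M}-1} \nu_i(i+1)$. Adjoining the requirement that $\nu$ be a valid distribution over the prior $\bar{M}$ policies, namely $\sum_{i=0}^{\bar{M}-1} \nu_i = 1$ and $\nu_i \geq 0$, recovers the full problem \eqref{eq:geppo_tvopt}.

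Finally, I would verify convexity: the objective $\sum_{i=0}^{\bar{M}-1} \nu_i(i+1)$ is linear in $\nu$; the constraint $\sum_{i=0}^{\bar{M}-1} \nu_i^2 \leq 1/B$ is a sublevel set of a convex quadratic and hence defines a convex region; and the remaining constraints $\sum_{i=0}^{\bar{M}-1} \nu_i = 1$ and $\nu_i \geq 0$ are affine, so \eqref{eq:geppo_tvopt} is a convex optimization problem. The only subtlety worth flagging, which I expect to be the main point rather than a technical obstacle, is the direction of the effective sample size constraint: we impose $\operatorname{ESS}^{\textnormal{GePPO}} \geq N$ rather than equality, reflecting that accuracy only improves with more effective samples while we merely wish to match the PPO baseline. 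This inequality is what keeps the feasible region convex, and it is the genuine tension against the objective, since minimizing $\sum_{i=0}^{\bar{M}-1} \nu_i(i+1)$ favors concentrating weight on recent policies whereas the effective sample size constraint favors spreading it out.
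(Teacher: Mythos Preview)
Your proposal is correct and follows essentially the same approach as the paper: derive the objective by observing that maximizing \eqref{eq:tvchange_gen} amounts to minimizing its denominator $\sum_i \nu_i(i+1)$, derive the first constraint by rewriting $\operatorname{ESS}^{\textnormal{GePPO}} \geq Bn$ from \eqref{eq:ess}, and append the simplex constraints on $\nu$. Your added verification of convexity and the remark on the inequality direction are reasonable elaborations but not required by the paper's own proof.
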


\begin{proof}
We can maximize the total variation distance change in \eqref{eq:tvchange_gen} by choosing $\nu$ that minimizes the denominator, leading to the objective in \eqref{eq:geppo_tvopt}. Next, we must have an effective sample size that is at least $N=Bn$. Equivalently, we need
\begin{equation}
\operatorname{ESS}^\textnormal{GePPO} = \frac{n}{\sum_{i=0}^{\bar{M}-1} \nu_i^2} \geq Bn,
\end{equation}
which can be rewritten as the first constraint in \eqref{eq:geppo_tvopt}. Finally, the other constraints in \eqref{eq:geppo_tvopt} ensure that $\nu$ is a distribution.
\end{proof}

Theorem~\ref{thm:geppo_tv} considers uniform policy weights with $M=B$, which are a feasible solution to \eqref{eq:geppo_tvopt}. As a result, Theorem~\ref{thm:geppo_tvopt} increases the change in total variation distance compared to Theorem~\ref{thm:geppo_tv}.

\begin{figure}
\centering
\includegraphics[width=1.00\linewidth]{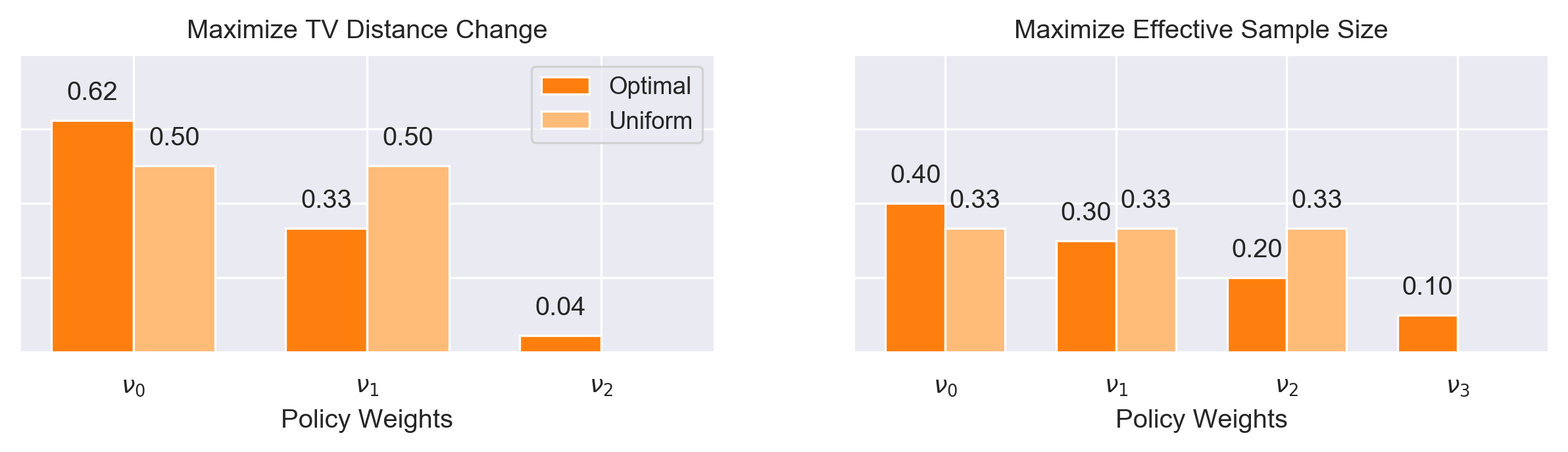}
\caption{Comparison of optimal and uniform policy weights for GePPO when $B=2$. Left: Policy weights determined by Theorem~\ref{thm:geppo_tvopt} and Theorem~\ref{thm:geppo_tv}, respectively. Right: Policy weights determined by Theorem~\ref{thm:geppo_essopt} and Theorem~\ref{thm:geppo_ess}, respectively.}\label{fig:polweights}
\end{figure}

Similarly, we can optimize the results in Theorem~\ref{thm:geppo_ess} as follows:

\begin{theorem}\label{thm:geppo_essopt}
Let $\bar{M} \geq 2B-1$ be the maximum number of prior policies. Consider the goal of maximizing the effective sample size with GePPO while maintaining the same change in total variation distance of the policy throughout training as PPO. Then, the policy weights $\nu$ that achieve this goal are the optimal solution to the convex optimization problem
\begin{equation}\label{eq:geppo_essopt}
\begin{split}
\min_{\nu_0,\ldots,\nu_{\bar{M}-1}} & \sum_{i=0}^{\bar{M}-1} \nu_i^2  \\
\mathrm{s.t.} \quad & \, \sum_{i=0}^{\bar{M}-1} \nu_i (i+1) = B, \quad  \sum_{i=0}^{\bar{M}-1} \nu_i = 1,  \\
 & \, \nu_i \geq 0, \, i=0,\ldots,\bar{M}-1. \\
\end{split}
\end{equation}
\end{theorem}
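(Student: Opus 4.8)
The plan is to follow the template of the proof of Theorem~\ref{thm:geppo_tvopt}, interchanging the roles of the objective and the total variation constraint. Both quantities in play are already available in closed form: the effective sample size in \eqref{eq:ess} and the per-$N$ total variation distance change in \eqref{eq:tvchange_gen}. Since maximizing $\operatorname{ESS}^\textnormal{GePPO} = n / \sum_{i=0}^{\bar{M}-1}\nu_i^2$ over the simplex amounts to minimizing the denominator, the objective becomes $\min \sum_{i=0}^{\bar{M}-1}\nu_i^2$. This is a convex quadratic, so the claim that \eqref{eq:geppo_essopt} is a convex optimization problem is immediate once the constraints are shown to be linear.

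For the constraints, I would impose that GePPO match PPO's total variation distance change throughout training. PPO moves the policy by $\epsilon^{\textnormal{PPO}}/2$ per $N=Bn$ samples, whereas by \eqref{eq:tvchange_gen} GePPO moves it by $\frac{B}{\sum_{i=0}^{\bar{M}-1}\nu_i(i+1)}\cdot \frac{\epsilon^{\textnormal{PPO}}}{2}$ over the same $N$ samples. Setting these equal and canceling $\epsilon^{\textnormal{PPO}}/2$ forces $\sum_{i=0}^{\bar{M}-1}\nu_i(i+1)=B$, which is precisely the first equality constraint in \eqref{eq:geppo_essopt}. The remaining constraints $\sum_{i=0}^{\bar{M}-1} \nu_i = 1$ and $\nu_i \geq 0$ are exactly the requirement that $\nu$ be a probability distribution over the prior $\bar{M}$ policies.

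The step that requires care, and the only place where this result is not purely formal, is justifying that \eqref{eq:tvchange_gen} actually applies at the optimizer. As noted after \eqref{eq:tvchange_gen}, that expression for the total variation change is valid only when the effective sample size is at least $N=Bn$, i.e.\ when $\sum_{i=0}^{\bar{M}-1} \nu_i^2 \le 1/B$. I would verify this holds automatically at the optimum: the uniform weights on $\bar{M} = 2B-1$ policies used in Theorem~\ref{thm:geppo_ess} satisfy $\sum_{i=0}^{2B-2}\nu_i(i+1)=B$ and give $\sum_{i=0}^{2B-2} \nu_i^2 = 1/(2B-1) \le 1/B$, so they are feasible for \eqref{eq:geppo_essopt}. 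Since the optimizer minimizes $\sum_{i=0}^{\bar{M}-1} \nu_i^2$, its optimal value is no larger than $1/(2B-1)$, hence strictly below $1/B$ for $B \geq 1$, and the effective sample size at the optimum is at least $(2B-1)n > N$. This simultaneously confirms the validity condition for \eqref{eq:tvchange_gen} and shows that Theorem~\ref{thm:geppo_essopt} can only increase the effective sample size relative to Theorem~\ref{thm:geppo_ess}, since the latter's weights are merely one feasible point of the program.
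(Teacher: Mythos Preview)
Your proposal is correct and follows essentially the same approach as the paper: maximize \eqref{eq:ess} by minimizing its denominator, translate the requirement that \eqref{eq:tvchange_gen} equal $\epsilon^{\textnormal{PPO}}/2$ into the linear constraint $\sum_i \nu_i(i+1)=B$, and impose the simplex constraints so that $\nu$ is a distribution. Your additional paragraph verifying that the validity condition $\sum_i \nu_i^2 \le 1/B$ for \eqref{eq:tvchange_gen} holds at the optimizer (via feasibility of the uniform weights from Theorem~\ref{thm:geppo_ess}) is a nice touch that the paper only addresses informally after the proof.
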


\begin{proof}
We can maximize the effective sample size in \eqref{eq:ess} by choosing $\nu$ that minimizes the denominator, leading to the objective in \eqref{eq:geppo_essopt}. Next, the total variation distance change in \eqref{eq:tvchange_gen} must equal $\nicefrac{\epsilon^{\textnormal{PPO}}}{2}$, which we accomplish with the first constraint in \eqref{eq:geppo_essopt}. Finally, the other constraints in \eqref{eq:geppo_essopt} ensure that $\nu$ is a distribution.
\end{proof}

Theorem~\ref{thm:geppo_ess} considers uniform policy weights with $M=2B-1$, which are a feasible solution to \eqref{eq:geppo_essopt}. Therefore, Theorem~\ref{thm:geppo_essopt} increases the effective sample size compared to Theorem~\ref{thm:geppo_ess}. Also note that by selecting $\bar{M}$ to be large, Theorem~\ref{thm:geppo_tvopt} and Theorem~\ref{thm:geppo_essopt} solve for both the optimal choice of $M$ and the corresponding optimal weights.

See Figure~\ref{fig:polweights} for a comparison of the optimal and uniform policy weights when $B=2$, which is the setting for the experiments we considered. Note that the optimal policy weights can be found very efficiently, and the convex optimization problems in Theorem~\ref{thm:geppo_tvopt} and Theorem~\ref{thm:geppo_essopt} only need to be solved once at the beginning of training to determine the policy weights $\nu$. 


\section{Implementation details}\label{sec:app_impl} 

\begin{table}
  \caption{Hyperparameter values for experimental results.}
  \label{tab:hyperparameters}
  \centering
  \begin{tabular}{L{0.4}  R{0.12} R{0.12} R{0.12} }  
    \toprule 
	\addlinespace
    General    & \multicolumn{1}{C{0.12}}{Default} & \multicolumn{1}{C{0.12}}{Ant} & \multicolumn{1}{C{0.12}}{Humanoid} \\                             
    \cmidrule(lr){1-1}     \cmidrule(lr){2-2} \cmidrule(lr){3-3} \cmidrule(lr){4-4} 
    Discount rate ($\gamma$)      			& $0.995$ \hspace*{0.5em} \\    
	GAE parameter ($\lambda$)      			& $0.97$  \hspace*{0.5em} \\
	Minibatches per epoch 					& $32$    \hspace*{0.5em} \\
	Epochs per update 						& $10$    \hspace*{0.5em} \\
	Value function optimizer 				& Adam    \hspace*{0.5em} \\
	Value function learning rate 			& $3\mathrm{e}{-4}$  \hspace*{0.5em} \\
	Policy optimizer						& Adam    \hspace*{0.5em} \\
	Initial policy learning rate ($\eta$) 	& $3\mathrm{e}{-4}$ \hspace*{0.5em} & $1\mathrm{e}{-4}$ \hspace*{0.5em} & $3\mathrm{e}{-5}$ \hspace*{0.5em}  \\
	Initial policy std. deviation multiple 	& $1.0$ \hspace*{0.5em} & $0.5$ \hspace*{0.5em} & $0.5$ \hspace*{0.5em}  \\
	\\
	PPO \\
    \cmidrule(lr){1-1} 	
    Clipping parameter ($\epsilon^{\textnormal{PPO}}$) 	& $0.2$ \hspace*{0.5em} \\
    Batch size ($N$) 								& $2{,}048$ \hspace*{0.5em} \\
	\\
	GePPO \\
    \cmidrule(lr){1-1} 	
    Clipping parameter ($\epsilon^{\textnormal{GePPO}}$) $^*$ 	& $0.1$ \hspace*{0.5em} \\
    Number of prior policies ($M$) $^*$ 	                & $4$ \hspace*{0.5em} \\    
	Minimum batch size ($n$) & $1{,}024$  \hspace*{0.5em} \\
	Adaptive factor ($\alpha$) & $0.03$ \hspace*{0.5em} \\
	Minimum threshold factor ($\beta$) & $0.5$  \hspace*{0.5em} \\
	V-trace truncation parameter ($\bar{c}$) & $1.0$ \hspace*{0.5em} \\
	\\
    \cmidrule(lr){1-1}
	\footnotesize{$^*$ Represents calculated value.} \\
	\addlinespace
    \bottomrule
  \end{tabular}
\end{table}

To aid in reproducibility, we describe the implementation details used to produce our experimental results. Note that all choices are based on the default implementation of PPO in \citet{henderson_2018}.

\paragraph{Network structures and hyperparameters}

As discussed in Section~\ref{sec:experiments}, we represent the policy $\pi$ as a multivariate Gaussian distribution where the mean action for a given state is parameterized by a neural network with two hidden layers of 64 units each and tanh activations. The state-independent standard deviation is parameterized separately, where for each action dimension the standard deviation is initialized as a multiple of half of the feasible action range. The value function $V^{\pi}(s)$ is parameterized by a separate neural network with two hidden layers of 64 units each and tanh activations. Observations are standardized using a running mean and standard deviation throughout the training process. Both the policy and value function are updated at every iteration using minibatch stochastic gradient descent. All hyperparameters associated with these optimization processes can be found in Table~\ref{tab:hyperparameters}. Due to the high-dimensional nature of Ant-v3 and Humanoid-v3, we tuned the initial learning rate and standard deviation multiple of the policy used for PPO on these tasks. These values can also be found in Table~\ref{tab:hyperparameters}. For a fair comparison, we used the same hyperparameter values for GePPO.

Following \citet{henderson_2018}, we consider the clipping parameter $\epsilon^{\textnormal{PPO}} = 0.2$ and a batch size of $N=2{,}048$ for PPO. Because sample trajectories for the tasks we consider can contain up to one thousand steps, we consider a minimum batch size of $n=1{,}024$ for GePPO. When writing the default batch size in PPO as $N=Bn$, this results in $B=2$. Using this value of $B$, the number of prior policies $M$ and the corresponding policy weights $\nu$ for GePPO are calculated according to Theorem~\ref{thm:geppo_essopt}. These weights are shown on the right-hand side of Figure~\ref{fig:polweights}. The clipping parameter $\epsilon^{\textnormal{GePPO}}$ is calculated based on these policy weights using Lemma~\ref{lem:geppo_param}, which results in $\epsilon^{\textnormal{GePPO}}=0.1$. Finally, the adaptive factor $\alpha$ and minimum threshold factor $\beta$ used for our adaptive learning rate method can be found in Table~\ref{tab:hyperparameters}. 

\paragraph{Advantage estimation}

Advantages $A^{\pi_k}(s,a)$ of the current policy in PPO are estimated using Generalized Advantage Estimation (GAE) \citep{schulman_2016} with $\lambda = 0.97$. Note that the parameter $\lambda$ in GAE determines a weighted average over $K$-step advantage estimates. When samples are collected using the current policy $\pi_k$, these multi-step advantage estimates are unbiased except for the use of bootstrapping with the learned value function. In GePPO, however, we must estimate $A^{\pi_k}(s,a)$ using samples collected from prior policies, so the multi-step advantage estimates used in GAE are no longer unbiased. Instead, we use V-trace \citep{espeholt_2018} to calculate corrected estimates that are suitable for the off-policy setting. V-trace corrects multi-step estimates while controlling variance by using truncated importance sampling. For a learned value function $V$, current policy $\pi_k$, and prior policy $\pi_{k-i}$ used to generate the data, this results in the following $K$-step target for the value function:
\begin{equation}
V^{\pi_k}_{\textnormal{trace}}(s_t) = V(s_t) + \sum_{j=0}^{K-1} \gamma^j \left( \prod_{i=0}^j c_{t+i} \right) \delta_{t+j}^V,
\end{equation}
where $\delta_{t}^V = r(s_{t},a_{t}) + \gamma V(s_{t+1}) - V(s_{t})$ and $c_{t} = \min \left( \bar{c}, \nicefrac{\pi_k (a_{t} \mid s_{t} )}{\pi_{k-i} (a_{t} \mid s_{t} )  } \right)$ represents a truncated importance sampling ratio with truncation parameter $\bar{c}$. We can use the same correction techniques to generate $K$-step advantage estimates from off-policy data. For $K \geq 2$, the corresponding $K$-step advantage estimate is given by
\begin{equation}
A^{\pi_k}_{\textnormal{trace}}(s_t,a_t) = \delta_{t}^V + \sum_{j=1}^{K-1} \gamma^j \left( \prod_{i=1}^j c_{t+i} \right) \delta_{t+j}^V,
\end{equation}
and for $K=1$ we have the standard one-step estimate $A^{\pi_k}_{\textnormal{trace}}(s_t,a_t) = \delta_{t}^V$ that does not require any correction. We use $\bar{c}=1.0$ in our experiments, which is the default setting in \citet{espeholt_2018}. Note that \citet{espeholt_2018} treat the final importance sampling ratio in each term separately, but in practice the truncation parameters are chosen to be the same so we do not make this distinction in our notation. Finally, we consider a weighted average over these corrected multi-step advantage estimates as in GAE.

Typically, the resulting advantage estimates are standardized within each minibatch of PPO. Note that the expectation of $A^{\pi_k}(s,a)$ with respect to samples generated by $\pi_k$ is zero, so the centering of advantage estimates ensures that the empirical average also satisfies this property. In the off-policy setting, the appropriate quantity to standardize is the starting point of policy updates 
\begin{equation}\label{eq:geppo_start}
\frac{\pi_k(a \mid s)}{\pi_{k-i}(a \mid s)} A^{\pi_k}(s,a),
\end{equation}
since the expectation of \eqref{eq:geppo_start} with respect to samples generated by prior policies is zero. Note that the standardization of \eqref{eq:geppo_start} generalizes the standardization done in PPO, where the probability ratios at the beginning of each policy update are all equal to one.

\paragraph{Computational resources}

All experiments were run on a Windows 10 operating system, Intel Core i7-9700 CPU with base speed of $3.0$ GHz and 32 GB of RAM, and NVIDIA GeForce RTX 2060 GPU with 6 GB of dedicated memory. OpenAI Gym \citep{brockman_2016} is available under The MIT License, and we make use of MuJoCo \citep{todorov_2012} with a license obtained from Roboti LLC. Using code that has not been optimized for execution speed, simulations for all seeds on a given environment required approximately 3 hours of wall-clock time for PPO and 4 hours of wall-clock time for GePPO. The increased wall-clock time of GePPO is due to the fact that GePPO performs twice as many updates in our experiments compared to PPO. Note that on average the sample efficiency benefits of GePPO not only lead to improved performance compared to PPO for a fixed number of samples, but also for a fixed amount of wall-clock time. 


\section{Additional experimental results}\label{sec:app_exp}

\begin{figure}[t]
\centering
\includegraphics[width=1.00\linewidth]{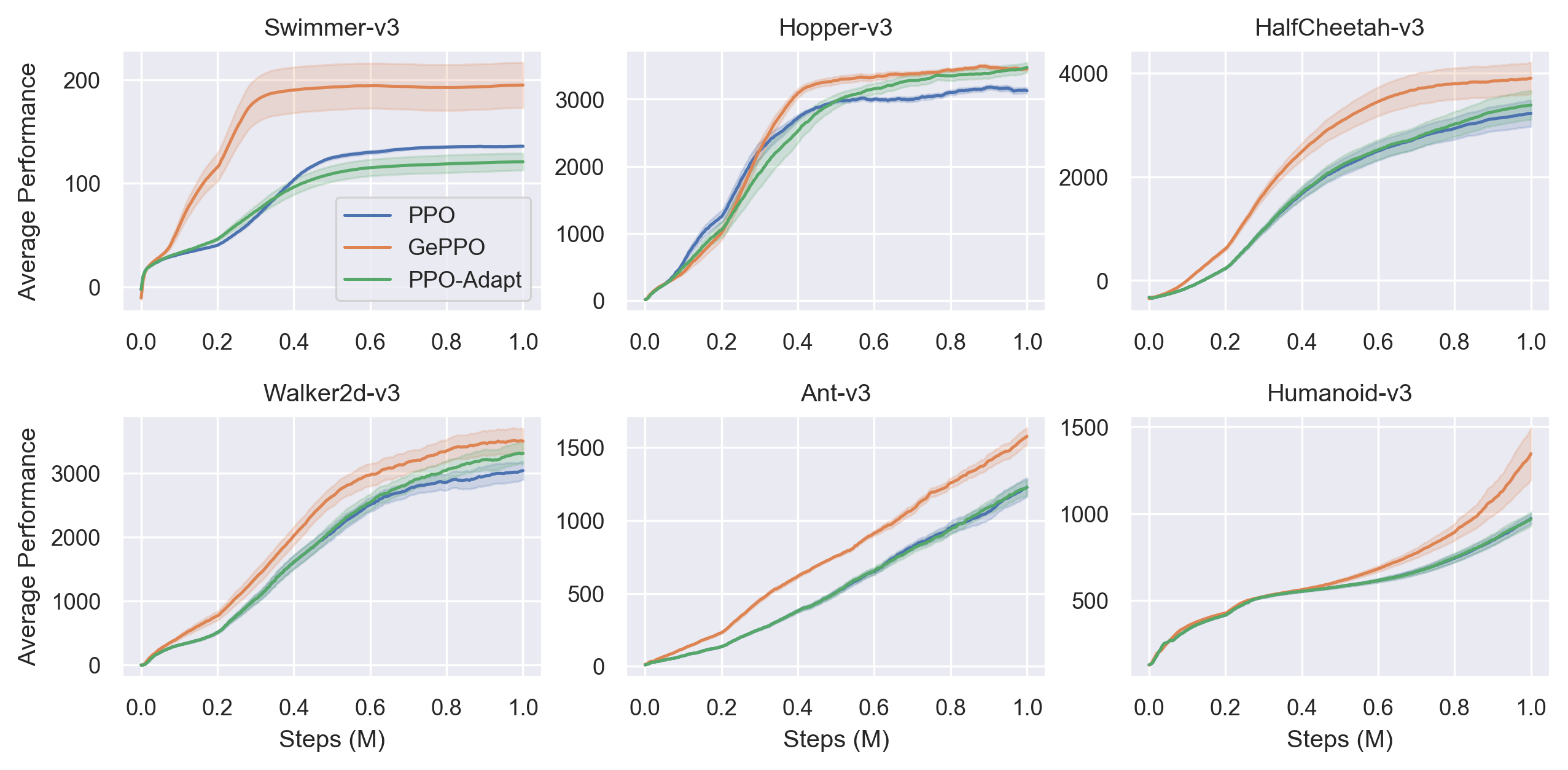}
\caption{Performance throughout training across MuJoCo tasks, where the policy network has two hidden layers of 64 units each. Shading denotes half of one standard error. PPO-Adapt represents PPO with the addition of our adaptive learning rate method.}\label{fig:J_ablation_6464}
\end{figure}

\begin{figure}
\centering
\includegraphics[width=1.00\linewidth]{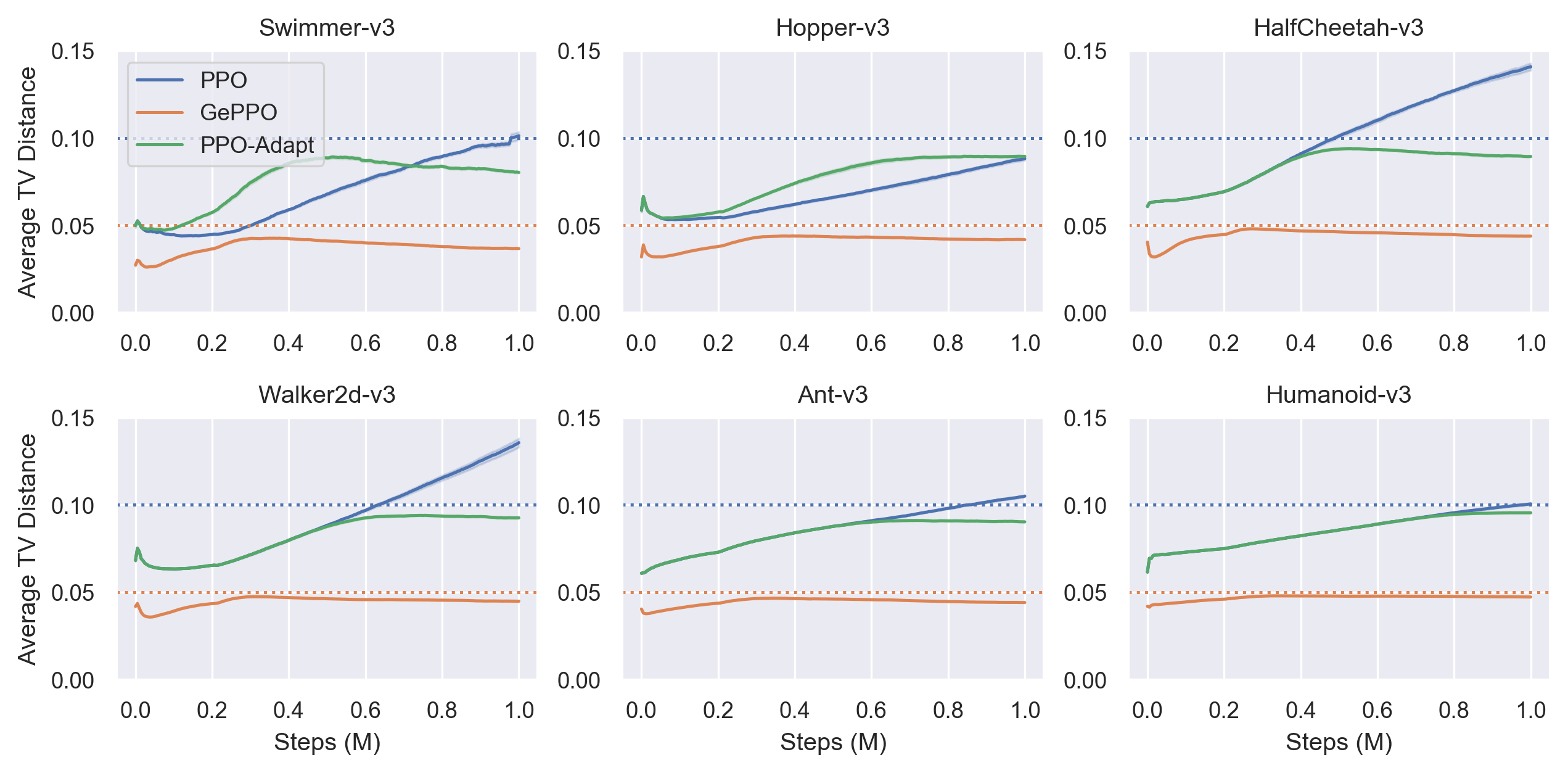}
\caption{Change in average total variation distance per policy update across MuJoCo tasks, where the policy network has two hidden layers of 64 units each. Shading denotes half of one standard error. PPO-Adapt represents PPO with the addition of our adaptive learning rate method. Horizontal dotted lines represent target total variation distances for PPO and GePPO, respectively.}\label{fig:tv_ablation_6464}
\end{figure}

In this section, we provide additional experimental results to further analyze the performance of our algorithm. In particular, we include results across all MuJoCo tasks for both the default policy network with two hidden layers of 64 units each and a standard wide policy network with two hidden layers of 400 and 300 units, respectively. In all cases, we also consider a variant of PPO that uses our adaptive learning rate method to better understand the drivers of performance in GePPO.

Figure~\ref{fig:J_ablation_6464} and Figure~\ref{fig:tv_ablation_6464} show the performance throughout training and change in average total variation distance per policy update, respectively, when using the default policy network. Note that the results shown in Figure~\ref{fig:Jplot} and Figure~\ref{fig:wide} for PPO and GePPO with the default policy network are repeated here for reference. As discussed in Section~\ref{sec:experiments}, we see that the change in total variation distance per policy update in PPO increases throughout training across all environments. Because hyperparameters were tuned using the default policy network, this trend does not lead to unstable performance. The addition of our adaptive learning rate to PPO results in comparable performance in this well-tuned setting, while ensuring that the change in total variation distance remains close to the target determined by the clipping parameter. This causes a decrease in total variation distance change per policy update compared to PPO at the end of training in most environments, and an increase compared to PPO at the beginning of training in Swimmer-v3 and Hopper-v3. Finally, note that GePPO outperforms the variant of PPO with our adaptive learning rate, which indicates that sample reuse is an important driver of performance in GePPO. On average, GePPO improves average performance throughout training by 31\% compared to PPO and 32\% compared to PPO with our adaptive learning rate.

\begin{figure}
\centering
\includegraphics[width=1.00\linewidth]{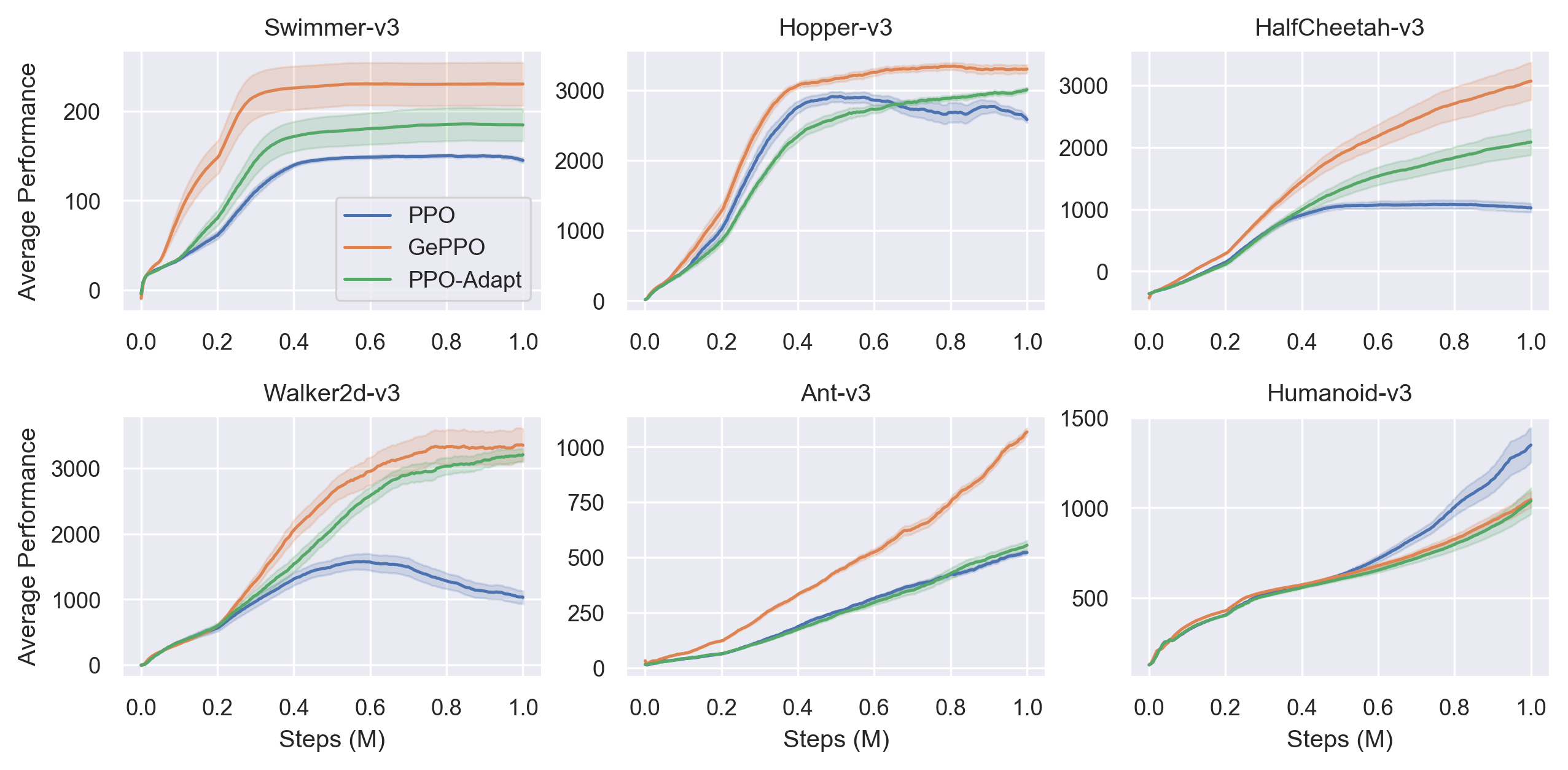}
\caption{Performance throughout training across MuJoCo tasks, where the policy network has two hidden layers of 400 and 300 units, respectively. Shading denotes half of one standard error. PPO-Adapt represents PPO with the addition of our adaptive learning rate method.}\label{fig:J_ablation_400300}
\end{figure}

\begin{figure}
\centering
\includegraphics[width=1.00\linewidth]{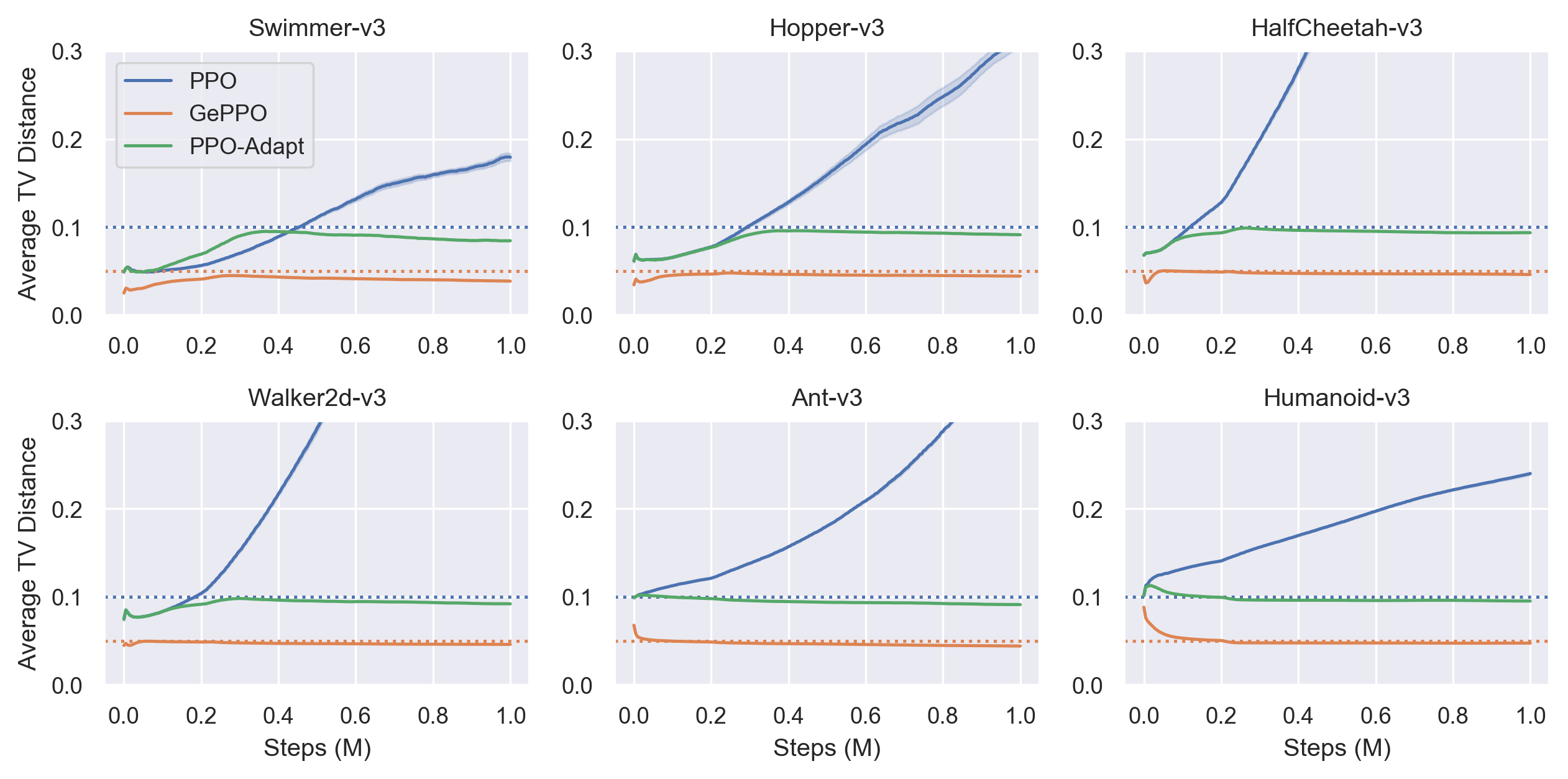}
\caption{Change in average total variation distance per policy update across MuJoCo tasks, where the policy network has two hidden layers of 400 and 300 units, respectively. Shading denotes half of one standard error. PPO-Adapt represents PPO with the addition of our adaptive learning rate method. Horizontal dotted lines represent target total variation distances for PPO and GePPO, respectively.}\label{fig:tv_ablation_400300}
\end{figure}

Figure~\ref{fig:J_ablation_400300} and Figure~\ref{fig:tv_ablation_400300} show the same metrics for a standard wide policy network with two hidden layers of 400 and 300 units, respectively. The results shown in Figure~\ref{fig:wide} for Walker2d-v3 are repeated here for reference. As described in Section~\ref{sec:experiments} for Walker2d-v3, we see that the wide policy network exacerbates the total variation distance trend observed in PPO under the default settings. The increased size of the policy network causes the probability ratio to be more sensitive to gradient updates during training, which renders the clipping mechanism ineffective and leads to excessively large policy updates. These large updates result in poor performance across several environments, and even cause performance to decline over time for Hopper-v3 and Walker2d-v3. The addition of our adaptive learning rate to PPO controls this source of instability, resulting in improved performance throughout training and stable policy updates with acceptable levels of risk. The exception to this trend is Humanoid-v3, where PPO outperforms PPO with our adaptive learning rate. This suggests that a more aggressive risk profile may improve performance on Humanoid-v3 without sacrificing training stability. Finally, by reusing samples from prior policies, GePPO further improves upon the performance of PPO with our adaptive learning rate. On average, GePPO improves average performance throughout training by 63\% compared to PPO and 34\% compared to PPO with our adaptive learning rate.

From these results, we see that the principled sample reuse in GePPO is a key driver of performance gains compared to PPO, while the use of an adaptive learning rate is important for stable performance that is robust to hyperparameter settings. Note that it is possible to achieve strong performance without the use of our adaptive learning rate, but this requires careful tuning of the learning rate that depends on several factors including the environment, policy network structure, length of training, and other hyperparameter settings.


\end{document}